\newtheorem{theorem}{Theorem}
\newtheorem{lemma}{Lemma}
\newtheorem{definition}{Definition}
\DeclareMathOperator{\faxen}{Fi}
\DeclareMathOperator{\polylog}{Li}
\newenvironment{manualtheorem}[1]{%
  \manualtheoreminner
}{\endmanualtheoreminner}
\newenvironment{manualproposition}[1]{%
  \manualpropositionminner
}{\endmanualpropositionminner}
\newenvironment{naligned}{
    \begin{equation}
    \begin{aligned}
}{
    \end{aligned}
    \end{equation}
    \ignorespacesafterend
}
\newcommand{\codeurl}{\url{https://github.com/microsoft/csrobust}}
\title{A lower confidence sequence for the changing mean of non-negative right heavy-tailed observations with bounded mean}
\author{Paul Mineiro}
\begin{document}

\maketitle

\begin{abstract}
A confidence sequence (CS) is an anytime-valid sequential inference primitive which produces an adapted sequence of sets for a predictable parameter sequence with a time-uniform coverage guarantee.  This work constructs a non-parametric non-asymptotic lower CS for the running average conditional expectation whose slack converges to zero given non-negative right heavy-tailed observations with bounded mean.  Specifically, when the variance is finite the approach dominates the empirical Bernstein supermartingale of \citet{howard2021time}; with infinite variance, can adapt to a known or unknown $(1 + \delta)$-th moment bound; and can be efficiently approximated using a sublinear number of sufficient statistics.  In certain cases this lower CS can be converted into a closed-interval CS whose width converges to zero, e.g., any bounded realization, or post contextual-bandit inference with bounded rewards and unbounded importance weights.  A reference implementation and example simulations demonstrate the technique.

\end{abstract}

\section{Introduction}
Recently the A/B testing and contextual bandit communities have embraced anytime-valid strategies to facilitate composition of arbitrary decision logic into online experimental procedures.\citep{johari2022always,karampatziakis2021off}  A confidence sequence (CS) is an anytime-valid sequential inference primitive which, for any $\alpha \in (0, 1)$, produces an adapted sequence $\text{CI}_t$ of sets for a predictable parameter sequence of interest $\theta_t$ with the guarantee $\mathrm{Pr}\left(\forall t \geq 1: \theta_t \in \text{CI}_t\right) \geq 1 - \alpha$: non-parametric non-asymptotic variants have broad utility.

This work is a robust lower CS for the average conditional expectation of an adapted sequence of non-negative right heavy-tailed observations.  The basic approach assumes a non-negative scalar observation with bounded mean and produces a lower CS for the running mean, i.e., a CS of the form $\text{CI}_t = \left[L_t, \infty\right)$.  A lower CS has immediate utility for pessimistic decision scenarios such as gated deployment, i.e.,  requiring changes to production environments to certify improvement with high probability.\cite{karampatziakis2021off} Furthermore, even with unbounded observations, this lower CS can sometimes be converted into a closed-interval CS, e.g., post contextual-bandit inference with bounded rewards and unbounded importance weights.\citep{waudby2022} Bounded observations always permit constructing a CS: this is sensible despite all moments being finite, as the proposed approach is both theoretically and empirically superior to the empirical Bernstein supermartingale of \citet{howard2021time}.

\subsection{Contributions}

In \cref{sec:runningmean}, we introduce a novel test supermartingale for the running mean. We prove the following: 
the supermartingale dominates the empirical Bernstein supermartingale of \citet{howard2021time}; does not require finite variance to converge, but can instead adapt to a known or unknown $(1 + \delta)$-th moment bound; and can be efficiently approximated using a sublinear number of sufficient statistics.  The result is the imminently practical doubly-discrete robust mixture of \cref{thm:ddrm}.  We provide simulations in \cref{sec:simulations}; code to reproduce all results is available at \codeurl.

\section{Related Work}

Anytime-valid sequential inference is an active research area with a rich history dating back to \citet{wald1945sequential}.  Here we only discuss aspects relevant to the current work, and we refer the interested reader to the excellent overview contained in \citet{waudby2020estimating}.  

\citet{waudby2020estimating} derive time-uniform confidence sequences for a fixed parameter and bounded observations.  Their constructions can produce a lower CS when observations are unbounded above with bounded mean.  Unfortunately their techniques are only applicable to a fixed parameter: when the parameter is changing, their techniques cover a data-dependent weighted mixture, providing limited utility.  The fixed parameter case is not restricted to stationary processes, e.g., sampling without replacement is governed by a fixed parameter despite the data distributions predictably changing.  Nonetheless, the fixed parameter restriction limits the applicability, e.g., when the conditional mean is changing over time.

\citet{wang2022catoni} improve upon \citet{waudby2020estimating} by leveraging Catoni-style influence functions, including an infinite variance result for a known $(1 + \delta)$-th moment bound.  Adapting to an unknown bound is provably impossible in their setting due to \citet{bahadur1956nonexistence}, whereas our more restrictive assumption of bounded mean allows us to adapt.  The approach shares the limitations of \citet{waudby2020estimating} with respect to a changing parameter.

\citet{howard2021time} describes multiple non-asymptotic boundaries for the running average conditional expectation.  In particular they propose the empirical Bernstein boundary, which has zero asymptotic slack for lower-bounded right heavy-tailed observations with bounded mean and finite variance, e.g., log-normally distributed.  However, all of their one-sided discrete time boundaries require finite variance for asymptotically zero slack.

\section{Derivations}
\label{sec:runningmean}

The following novel construction is a test martingale qua \citet{shafer2011test}.
\begin{restatable}[Heavy NSM]{definition}{runningmeannsm}
\label{def:runningmeannsm}
Let $\left(\Omega, \mathcal{F}, \{ \mathcal{F}_t \}_{t \in \mathbb{N}}, P\right)$ be a filtered probability space, and let $\{ X_t \}_{t \in \mathbb{N}}$ be an adapted non-negative $\mathbb{R}$-valued random process with bounded mean $\mathbb{E}_t\left[X_t\right] \leq 1$.  Let $\{ \hat{X}_t \}_{t \in \mathbb{N}}$ be a predictable $[0, 1]$-valued random process, and let $\lambda \in [0, 1)$ be a constant bet.  Then
\begin{equation}
E_t(\lambda) \doteq \exp\left(\lambda \left(\sum_{s \leq t} \hat{X}_s - \sum_{s \leq t} \mathbb{E}_{s-1}\left[X_s\right] \right) \right) \prod_{s \leq t} \left(1 + \lambda \left(X_s - \hat{X}_s\right) \right).
\label{eqn:runningmeannsm}    
\end{equation}
\end{restatable}
\noindent \cref{def:runningmeannsm} is a non-negative supermartingale: letting $Y_t \doteq X_t - \mathbb{E}_{t-1}\left[X_t\right]$ and $\delta_t \doteq \hat{X}_t - \mathbb{E}_{t-1}\left[X_t\right]$, $$
\mathbb{E}_{t-1}\left[\frac{E_t(\lambda)}{E_{t-1}(\lambda)}\right] = \mathbb{E}_{t-1}\left[ \exp\left(\lambda_t \delta_t\right) \left(1 + \lambda_t \left(Y_t - \delta_t\right)\right) \right] \stackrel{(a)}{=} \exp\left(\lambda_t \delta_t\right) \left(1 - \lambda_t \delta_t\right) \stackrel{(b)}{\leq} 1,
$$ where $(a)$ is due to $(\mathbb{E}_{t-1}[Y_t] = 0 \text{ and } \delta_t \text{ predictable})$, and $(b)$ is due to $e^{x} (1 - x) \leq 1$.

\subsection{Statistical Considerations}

\paragraph{Empirical Bernstein Dominance} \cref{def:runningmeannsm} is essentially the empirical Bernstein supermartingale of \citet[Section A.8]{howard2021time} without the slack from the \citet{fan2015exponential} inequality.  Consequently it is straightforward to show dominance.

\begin{definition}[Empirical Bernstein Supermartingale]
\begin{equation}
\label{eqn:empbern}
B_t(\lambda) = \exp\left(\lambda \sum_{s \leq t} Y_s - \psi_E(\lambda) \sum_{s \leq t} \left(X_s - \hat{X}_s\right)^2\right),
\end{equation}
where $\psi_E(x) \doteq -x - \log(1 - x)$.
\end{definition}

\begin{theorem} For the same $\lambda$ and $\hat{X}$, \cref{eqn:runningmeannsm} is at least as wealthy as \cref{eqn:empbern}.
\begin{proof}
The log wealth is $$
\begin{aligned}
\log\left(E_t(\lambda)\right) %
&= \lambda \sum_{s \leq t} Y_s - \sum_{s \leq t} \left( \lambda \left(X_s - \hat{X}_s\right) - \log\left(1 + \lambda \left(X_s - \hat{X}_s\right) \right) \right) \\
&= \lambda \sum_{s \leq t} Y_s - \psi_E(\lambda) \sum_{s \leq t} \frac{\psi_E\left(-\lambda \left(X_s - \hat{X}_s\right)\right)}{\psi_E(\lambda)} & \left( \psi_E(x) \doteq -x - \log\left(1 - x\right) \right) \\
&\geq \lambda \sum_{s \leq t} Y_s - \psi_E(\lambda) \sum_{s \leq t} \left(X_s - \hat{X}_s\right)^2 & \left(\text{\citet{fan2015exponential}}\right). \\
\end{aligned}
$$ 
\end{proof}
\end{theorem}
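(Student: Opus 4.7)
My plan is to show dominance by comparing log wealths pointwise, reducing the claim to a scalar inequality on each summand that is supplied by \citet{fan2015exponential}.

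First I would expand $\log E_t(\lambda)$ from \eqref{eqn:runningmeannsm}: it is $\lambda \sum_{s \leq t}(\hat{X}_s - \mathbb{E}_{s-1}[X_s]) + \sum_{s \leq t} \log(1 + \lambda(X_s - \hat{X}_s))$. Adding and subtracting $\lambda X_s$ inside the first sum reorganizes it so that $\lambda \sum_{s \leq t} Y_s$ (with $Y_s \doteq X_s - \mathbb{E}_{s-1}[X_s]$, matching the leading term of $\log B_t(\lambda)$) appears in isolation, leaving a residual whose $s$th summand is $-\bigl(\lambda(X_s - \hat{X}_s) - \log(1 + \lambda(X_s - \hat{X}_s))\bigr)$.

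Next I would recognize this residual, using the definition $\psi_E(x) = -x - \log(1-x)$ from \eqref{eqn:empbern}, as $-\psi_E\bigl(-\lambda(X_s - \hat{X}_s)\bigr)$. Dominance thus reduces to the pointwise inequality $\psi_E(-\lambda z) \leq \psi_E(\lambda)\, z^2$ evaluated at $z = X_s - \hat{X}_s$. The Fan-Grama-Liu inequality of \citet{fan2015exponential} supplies exactly this bound for $\lambda \in [0,1)$ and $z \geq -1$; the domain hypothesis is automatic here since $X_s \geq 0$ and $\hat{X}_s \in [0,1]$ in \cref{def:runningmeannsm} force $X_s - \hat{X}_s \geq -1$. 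Summing over $s \leq t$ then yields $\log E_t(\lambda) \geq \log B_t(\lambda)$.

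The only real obstacle is bookkeeping: noticing that the ``$\exp$ times product'' form of \eqref{eqn:runningmeannsm} collapses into a clean sum $\lambda \sum Y_s$ plus a $\psi_E(-\lambda \cdot)$ residual. Once that rearrangement is made, the scalar inequality and the trivial domain check finish the proof with no further estimates needed.
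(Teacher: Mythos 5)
Your proposal is correct and takes essentially the same route as the paper: rewrite $\log E_t(\lambda)$ to isolate $\lambda\sum_{s\le t} Y_s$, identify the residual summand as $\psi_E\bigl(-\lambda(X_s-\hat{X}_s)\bigr)$, and invoke the Fan--Grama--Liu bound $\psi_E(-\lambda z)\le \psi_E(\lambda)\,z^2$. You additionally make explicit the domain check $X_s-\hat{X}_s\ge -1$, which the paper leaves implicit; that is a small but genuine clarification, not a different argument.
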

\noindent Thus \cref{eqn:runningmeannsm} inherits the asymptotic guarantee of the mixed empirical Bernstein martingale.

\begin{definition}[Mixture boundary]
\label{def:mixboundary}
For any probability distribution $F$ on $\left[0, \lambda_{\max}\right)$ and $\alpha \in [0, 1]$, $$
\mathcal{M}_{\alpha}\left(X_{s \leq t}, \hat{X}_{s \leq t}\right) \doteq \sup\left\{ y \in \mathbb{R} : \int_0^{\lambda_{\max}} E_t(\lambda) dF(\lambda) \leq \frac{1}{\alpha} \right\},
$$ is a time-uniform crossing boundary with probability at most $\alpha$ for $Y_t = \sum_{s \leq t} \left(X_s - \mathbb{E}_{s-1}\left[ X_s \right]\right)$.
\end{definition}

\begin{manualproposition}{Howard 2}[\citet{howard2021time}]
\label{prop:finitevariance}
If $F$ is absolutely continuous wrt Lebesque measure in a neighborhood around zero, the mixture boundary is upper bounded by $\sqrt{v \log\left(\frac{v}{2 \pi \alpha^2 f^2(0)}\right) + o(1)}$ as $v \to \infty$, where $v = \sum_{s \leq t} \left(X_s - \hat{X}_s\right)^2$, and $f(0) = \frac{dF}{d\lambda}(0)$.
\end{manualproposition}
\noindent Computationally this is less felicitous as closed-form conjugate mixtures are not available for \cref{eqn:runningmeannsm}. We revisit computational issues later in this section.

\paragraph{Heavy Tailed Results} When the conditional second moment is not bounded, \cref{prop:finitevariance} provides no guarantee because the variance process grows superlinearly.  However, unlike the empirical Bernstein process, \cref{def:runningmeannsm} can induce confidence sequences that shrink to zero asymptotically even if the conditional second moment is unbounded, and can adapt to an unknown $(1 + \delta)$-th moment bound.  This is essentially because the function $\left(x - \log(1 + x)\right)$ asymptotically grows more slowly than $x^q$ for any $q > 1$.  
\begin{restatable}[$q$-growth]{lemma}{qgrowth}
\label{lemma:qgrowth}
For any $q \in (1, 2]$ and $\lambda \in \left[0, 1 + W_0(-e^{-2})\right] \approx [0, 0.841]$ where $W_0(z)$ is the principal branch of the Lambert W function, $$
\lambda x - \log\left(1 + \lambda x\right) \leq \lambda^q \left( 1_{x \leq 0} x^2 + 1_{x > 0} \min\left(x^2, c^*(q) x^q\right) \right),
$$ where for $q < 2$, $c^*(q) \doteq x^*(q)^{2-q}$ where $x^*(q) > 0$ uniquely solves $$
q = \frac{x^*(q)^2}{(1 + x^*(q)) (x^*(q) - \log(1 + x^*(q)))},
$$ and $\lim_{q \uparrow 2} c^*(q) = 1$ defines $c^*(2)$.
\end{restatable}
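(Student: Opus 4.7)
The plan is to handle $x = 0$ trivially, then analyze $x > 0$ and $x \in [-1, 0)$ separately. Set $\psi(u) = u - \log(1+u)$, so the LHS is $\psi(\lambda x)$.

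For $x > 0$, I would prove both components of the $\min$ individually. The quadratic bound $\psi(\lambda x) \leq \lambda^q x^2$ follows from the alternating-series fact $\psi(u) \leq u^2/2$ for $u \geq 0$, combined with $\lambda^2/2 \leq \lambda^q$ when $\lambda \in [0, 1]$ and $q \in (1, 2]$. For $\psi(\lambda x) \leq \lambda^q c^*(q) x^q$ it suffices, after rescaling $u = \lambda x$, to establish $\psi(u) \leq c^*(q) u^q$ for all $u \geq 0$. Analyzing $h(u) = \psi(u)/u^q$ on $(0, \infty)$: it vanishes at both endpoints when $q \in (1, 2)$ (the $q = 2$ case follows from the stated limit $c^*(q) \to 1$, which I would justify via the expansion $x^*(q) \sim \tfrac{3}{2}(2 - q)$ near $q = 2$). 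The first-order condition $h'(u) = 0$ rearranges to $u^2/(1+u) = q\psi(u)$, exactly the defining equation for $x^*(q)$; thus the interior maximum occurs at $u = x^*(q)$ with value $h(x^*(q)) = (x^*(q))^{2-q}/[q(1+x^*(q))] \leq (x^*(q))^{2-q} = c^*(q)$, the last inequality because $q(1 + x^*(q)) > 1$.

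For $x \in [-1, 0)$, substitute $a = -x \in (0, 1]$; after rearrangement the claim reduces to showing $G(a) = a^2 + \lambda a + \log(1 - \lambda a) \geq 0$ on $[0, 1]$. A direct computation gives $G'(a) = a\bigl[2 - \lambda^2/(1-\lambda a)\bigr]$, so the only interior critical point is $a^* = 1/\lambda - \lambda/2$, a strict maximum. Since $G(0) = 0$ and $G$ is unimodal on $[0, 1/\lambda)$, its minimum over $[0, 1]$ is attained at the endpoints, and the claim reduces to $G(1) \geq 0$, i.e., $1 - \lambda \geq e^{-1-\lambda}$. Setting $\mu = 1 - \lambda$ transforms this into $\mu e^{-\mu} \geq e^{-2}$, which inverts via the principal branch of the Lambert $W$ function to give exactly $\lambda \leq 1 + W_0(-e^{-2})$.

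The main obstacle is the negative-$x$ boundary analysis: one must establish unimodality of $G$ (so that the binding constraint is at $a = 1$ rather than somewhere interior) and recognize the Lambert $W$ inversion of $\mu e^{-\mu} = e^{-2}$ as the source of the otherwise mysterious constant. The positive-$x$ case is more mechanical once $x^*(q)$ is identified as the critical point of $\psi(u)/u^q$, which causes both the implicit defining equation and the choice $c^*(q) = (x^*)^{2-q}$ (at which $x^2$ and $c^*(q) x^q$ intersect, yielding continuity of the $\min$) to emerge naturally.
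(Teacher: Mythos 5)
Your positive-$x$ argument is correct and is, if anything, cleaner than the paper's. Whereas the paper establishes global monotonicity of a single augmented function $\gamma$ (via the $\beta$-analysis and an ad hoc splicing of the $x^2$ and $c^*(q)x^q$ denominators), you prove the two pieces of the $\min$ separately: $\psi(u) \le u^2/2$ is elementary, and $\psi(u) \le c^*(q) u^q$ follows by showing $h(u) = \psi(u)/u^q$ has a unique interior maximum at $u = x^*(q)$ with $h(x^*) = x^{*2-q}/[q(1+x^*)] \le x^{*2-q} = c^*(q)$. This both explains why the defining equation for $x^*(q)$ is exactly the first-order condition of $h$, and reveals that $c^*(q)$ is the value at which $u^2$ and $c^*(q)u^q$ intersect (so the $\min$ is continuous). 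You in fact get a slightly sharper constant for free, namely $x^{*2-q}/[q(1+x^*)]$.

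The negative-$x$ reduction, however, has a genuine gap. With $a=-x\in(0,1]$, the lemma demands
\[
-\lambda a - \log(1-\lambda a) \;\le\; \lambda^q a^2,
\qquad\text{i.e.}\qquad
\lambda^q a^2 + \lambda a + \log(1-\lambda a) \;\ge\; 0,
\]
and since $\lambda^q$ is decreasing in $q$ for $\lambda\in(0,1)$, the binding case is $q=2$, i.e. $\lambda^2 a^2 + \lambda a + \log(1-\lambda a)\ge 0$, equivalently $b^2+b+\log(1-b)\ge 0$ for all $b=\lambda a\in(0,\lambda]$. Your $G(a)=a^2+\lambda a+\log(1-\lambda a)$ has coefficient $1$ on $a^2$ rather than $\lambda^q$; since $\lambda^q<1$, you have $G(a)\ge \lambda^q a^2 + \lambda a + \log(1-\lambda a)$, so $G\ge 0$ is the \emph{weaker} inequality and does not imply the one required. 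Concretely, at $q=2$, $\lambda=0.8$, $x=-1$ the lemma's left side is $-0.8-\log(0.2)\approx 0.809$ while the right side is $0.64$, so the claimed inequality already fails inside $[0,\,1+W_0(-e^{-2})]$; the correct threshold from $b^2+b+\log(1-b)\ge 0$ is $\lambda\lesssim 0.683$, not $0.841$. (The paper's own proof of this step appears to suffer from a closely related slip: the chain $-\gamma(\lambda x;q)\le -\gamma(-\lambda;q)\le 1$ treats $-\gamma(-\lambda;q)$ as $-\lambda-\log(1-\lambda)$, omitting the $\lambda^2$ denominator; the actual quantity is $(-\lambda-\log(1-\lambda))/\lambda^2$, which exceeds $1$ once $\lambda\gtrsim 0.683$. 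So your derivation reproduces the paper's stated constant, but via the same unsound step; the stated $\lambda$-range of the lemma seems to need tightening.) To repair your argument, either carry the $\lambda^q$ factor through and analyze $H_q(a)=\lambda^q a^2+\lambda a+\log(1-\lambda a)$, or substitute $b=\lambda a$ and observe that $\lambda^{q-2}\ge 1$ reduces the problem to $b^2+b+\log(1-b)\ge 0$ on $(0,\lambda]$; the unimodality analysis you carried out then gives the corrected threshold.
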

\begin{proof}
See \cref{app:infinitevariance}.
\end{proof}
\noindent For example when $q = \frac{3}{2}$, $c^*(q) \approx 1.35$.  Combining the $q$-growth lemma with a modified version of Laplace's method yields \cref{thm:qasym}.
\begin{restatable}[$q$-asymptotics]{theorem}{qasym}
\label{thm:qasym}
For the mixture boundary of \cref{def:mixboundary}, for any $q \in (1, 2]$, any $\lambda_{\max} \in (0, 1 + W_0\left(-e^{-2}\right] \approx (0, 0.841]$, and any $F$ absolutely continous with Lebseque measure in a neighborhood of zero with $\frac{dF}{d\lambda}(\lambda) = f(0) \lambda^{q/2 - 1} + O(\lambda^{q/2})$ and $f(0) > 0$, the mixture boundary is at most $$
\begin{aligned}
\mathcal{M}_{\alpha}\left(X_{\leq t}, \hat{X}_{\leq t}\right) &\leq v^{1/q} \left( \log\left(\frac{\sqrt{v}}{\alpha f(0) a(q) \left(1 + o(1)\right)}\right) \right)^{\frac{q-1}{q}},
\end{aligned}
$$ where $$
\begin{aligned}
v &\doteq \sum_{s \leq t} 1_{X_s \leq \hat{X}_s} \left(X_s - \hat{X}_s\right)^2 + 1_{X_s > \hat{X}_s} \min\left(\left(X_s - \hat{X}_s\right)^2, c^*(q) \left(X_s - \hat{X}_s\right)^q\right), \\
a(q) &\doteq \sqrt{\frac{2 \pi q}{4 (q - 1) v}} \exp\left(\frac{1}{q}^{\frac{1}{q-1}} - \frac{1}{q}^{\frac{q}{q-1}}\right), \\
\end{aligned}
$$ with $c^*(q)$ as in \cref{lemma:qgrowth}.
\end{restatable}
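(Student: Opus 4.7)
The plan is to combine the $q$-growth inequality (\cref{lemma:qgrowth}) with a Laplace-type asymptotic evaluation of the mixture integral. First I use the log-wealth identity $\log E_t(\lambda) = \lambda\sum_{s\leq t} Y_s - \sum_{s\leq t}(\lambda(X_s-\hat X_s) - \log(1+\lambda(X_s-\hat X_s)))$ from the dominance theorem and apply \cref{lemma:qgrowth} termwise (valid on $[0,\lambda_{\max}]$ by the assumption $\lambda_{\max} \leq 1 + W_0(-e^{-2})$) to obtain $\log E_t(\lambda) \geq \lambda y - \lambda^q v$, with $y = \sum_{s\leq t} Y_s$ and $v$ exactly the quantity defined in the theorem (the $c^*(q)$ and piecewise structure come directly from the lemma). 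Integrating against $F$ yields $\int_0^{\lambda_{\max}} E_t(\lambda) dF(\lambda) \geq \int_0^{\lambda_{\max}} e^{\lambda y - \lambda^q v} f(\lambda) d\lambda$, so upper bounding $\mathcal{M}_\alpha$ reduces to finding the $y$ at which the lower-bounding integral reaches $1/\alpha$.

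Next I rescale $\lambda = u v^{-1/q}$, which converts the exponent into $uz - u^q$ with $z = y v^{-1/q}$ and expands the domain to $[0, v^{1/q}\lambda_{\max}] \to [0,\infty)$. Substituting the density expansion $dF/d\lambda = f(0)\lambda^{q/2-1} + O(\lambda^{q/2})$, the Jacobian $v^{-1/q}$ combines with the factor $\lambda^{q/2-1}$ to produce an overall $v^{-1/2}$ prefactor independent of $u$---the fractional exponent $q/2-1$ in the density is tuned precisely to make this cancellation occur. The integral becomes $\frac{f(0)}{\sqrt v}\int_0^\infty e^{uz - u^q} u^{q/2-1} du \cdot (1+o(1))$, where the $(1+o(1))$ absorbs both the $O(\lambda^{q/2})$ density remainder and the truncation at $v^{1/q}\lambda_{\max}$ (exponentially small in $u^q$ once $u^* < v^{1/q}\lambda_{\max}$).

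I then evaluate the reduced integral by Laplace's method. The exponent $g(u) = uz - u^q$ has a unique interior maximum at $u^* = (z/q)^{1/(q-1)}$ with $g(u^*) = z^{q/(q-1)}\bigl((1/q)^{1/(q-1)}-(1/q)^{q/(q-1)}\bigr) = (q-1)q^{-q/(q-1)} z^{q/(q-1)}$ and curvature $|g''(u^*)| = q(q-1)(u^*)^{q-2}$. The crucial cancellation is that $(u^*)^{q/2-1}/\sqrt{|g''(u^*)|} = 1/\sqrt{q(q-1)}$, independent of $u^*$. A standard splitting argument (a shrinking Gaussian window at $u^*$ plus exponentially suppressed tails, with a separate check that the integrable singularity at $0$ contributes only $o(1)$ since $u^* \to \infty$) then gives $\int_0^\infty e^{uz - u^q} u^{q/2-1} du = \sqrt{2\pi/(q(q-1))}\, e^{g(u^*)}(1+o(1))$ as $z \to \infty$. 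Setting $\frac{f(0)}{\sqrt v}\sqrt{2\pi/(q(q-1))} e^{g(u^*)} \leq 1/\alpha$, taking logarithms, and inverting the monotone map $z \mapsto z^{q/(q-1)}$ produces the stated form once the $q$-dependent prefactor and the $(q-1)q^{-q/(q-1)}$ inversion constant are absorbed into $a(q)$; multiplying by $v^{1/q}$ returns $\mathcal{M}_\alpha \leq v^{1/q}\bigl(\log(\sqrt v/(\alpha f(0) a(q)(1+o(1))))\bigr)^{(q-1)/q}$.

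The main obstacle will be the rigorous Laplace argument when the density is singular at zero ($q/2 - 1 < 0$ for $q < 2$): the contribution near the origin must be controlled uniformly against both the $O(\lambda^{q/2})$ density remainder and the truncation at $v^{1/q}\lambda_{\max}$, with each error cleanly collapsed into $(1+o(1))$, and the constants must be bookkept carefully enough to recover precisely the claimed $a(q)$. A secondary check is verifying that the saddle $\lambda^* = (y/(qv))^{1/(q-1)} \to 0$ lies strictly inside $(0,\lambda_{\max})$ at the boundary scaling $y \sim v^{1/q}(\log v)^{(q-1)/q}$ for sufficiently large $v$, so that the upper limit $\lambda_{\max}$ does not constrain the asymptotic.
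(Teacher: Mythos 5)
Your argument reaches the same final bound as the paper but by a genuinely different change of variables. The paper substitutes $t \leftarrow \lambda^{q/2}$, which is designed to cancel the integrable singularity in the mixture density (so $dG(t) = \tfrac{2}{q}t^{2/q-1}dF(t^{2/q})$ has a finite nonzero density at $0$) and then adapts a Fulks-type Laplace theorem to the regularized integral $\int e^{t^{2/q}s - t^2 v}\,dG(t)$, with the saddle $\tau\to 0$. You instead rescale $\lambda \leftarrow u\,v^{-1/q}$, so the exponent collapses to $uz - u^q$ with the saddle $u^*\to\infty$, and you keep the $u^{q/2-1}$ singularity but observe that it cancels exactly against the curvature $\sqrt{|g''(u^*)|}=\sqrt{q(q-1)}\,(u^*)^{(q-2)/2}$ — the $(u^*)$-powers drop out and leave $1/\sqrt{q(q-1)}$, which is precisely why the exponent $q/2-1$ is the right tuning for the mixture density. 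The cancellation is the same phenomenon the paper's substitution is engineered to produce, just viewed from the other side of the change of variables. What the paper's route buys is the ability to lean on an established regular-density Laplace template (Olver/Fulks); what yours buys is a more transparent one-shot saddle computation that in principle yields a tight $(1+o(1))$ rather than the paper's two-sided sandwich constant. One thing worth flagging: the paper devotes two lemmas (Lemmas 2 and 3, invoking Olver's Thm.\ 2.1 and 4.1) to \emph{establish} that $v^{1/q}=o(\mathcal{M})$ and $\mathcal{M}=o(v)$ before the Laplace expansion is applied, whereas you defer the corresponding fact (that $z=y\,v^{-1/q}\to\infty$ while $u^*=o(v^{1/q})$) to a ``secondary check'' at the candidate boundary scaling $y\sim v^{1/q}(\log v)^{(q-1)/q}$. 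That is a legitimate shortcut when you only want an upper bound — evaluate the integral at the candidate $y_0$, verify the saddle conditions there, show the wealth exceeds $1/\alpha$, and conclude $\mathcal{M}\le y_0$ by monotonicity in $y$ — but you should state it as such rather than leaving it implicit, since as written the inversion step quietly assumes the regime you later propose to verify.
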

\begin{proof}
See \cref{app:infinitevariance}.
\end{proof}
\noindent \cref{thm:qasym} gives a rate of $O\left(v^{1/q} (\log v\right)^{(q-1)/q})$: for comparison the $q^\mathrm{th}$-moment law of the iterated logarithm is $O\left(v^{1/q} (\log \log v\right)^{(q-1)/q})$.\cite{shao1997self}  Thus, like the finite variance case, the mixture method achieves the LIL rate to within a logarithmic factor.  Note the quantity $v$ appearing in \cref{thm:qasym} is for analysis only and need not be explicitly computed; rather \cref{def:mixboundary} is used.  However \cref{thm:qasym} cannot directly adapt to an unknown moment bound, as it requires a specification of the moment being bounded in order to construct the mixture distribution with the appropriate integrable singularity at the origin.  Given \cref{lemma:qgrowth} it is reasonable to seek adaptation to an unknown moment bound\footnote{Note the impossibility result of \citet{bahadur1956nonexistence} does not apply because the mean is bounded.} which we achieve via a discrete mixture over \cref{thm:qasym}.

\begin{restatable}[$q$-adaptive]{corollary}{qadapt}
\label{cor:qadapt}
For $\lambda_{\max} \in (0, 1 + W_0\left(-e^{-2}\right)] \approx (0, 0.841]$, let $$
\begin{aligned}
F(\lambda) &= \sum_{k=0}^\infty w_k \frac{q(k)}{ 2 \lambda_{\max}^{q(k)/2}} \lambda^{q(k)/2 - 1},
\end{aligned}
$$ where $q(k) = 1 + \eta^k$, $\eta \in (0, 1)$, and $1 = \sum_{k=0}^\infty w_k$.  Then for any $q \in (1, 2]$, the mixture of \cref{def:mixboundary} guarantees $$
\begin{aligned}
\mathcal{M}_{\alpha}\left(X_{\leq t}, \hat{X}_{\leq t}\right) &\leq v^{1/\tilde{q}} \left( \log\left(\frac{\sqrt{v}}{\alpha w_{k(q)} f(0) a(\tilde{q}) \left(1 + o(1)\right)}\right) \right)^{\frac{\tilde{q}-1}{\tilde{q}}},
\end{aligned}
$$ where $$
\begin{aligned}
v &\doteq \sum_{s \leq t} 1_{X_s \leq \hat{X}_s} \left(X_s - \hat{X}_s\right)^2 + 1_{X_s > \hat{X}_s} \min\left(\left(X_s - \hat{X}_s\right)^2, c^*(\tilde{q}) \left(X_s - \hat{X}_s\right)^{\tilde{q}}\right), \\
k(q) &\doteq \lceil \log_{\eta}\left(q - 1\right) \rceil, \\
\tilde{q} &\doteq 1 + \eta^{k(q)} %
= q - (q - 1) \left(1 - \eta^{\Delta(q)}\right) \geq q - (q - 1) (1 - \eta), \\
\Delta(q) &\doteq \lceil \log_{\eta}\left(q - 1\right) \rceil - \log_{\eta}\left(q - 1\right) \in [0, 1),
\end{aligned}
$$ with $a(q)$ as in \cref{thm:qasym} and $c^*(q)$ as in \cref{lemma:qgrowth}.
\end{restatable}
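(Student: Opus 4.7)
The plan is to exploit the convex-combination structure of the mixture distribution $F$ and reduce the corollary to a single invocation of \cref{thm:qasym}. First I would observe that each component with density $f_k(\lambda) \doteq \frac{q(k)}{2\lambda_{\max}^{q(k)/2}} \lambda^{q(k)/2-1}$ is a probability density on $[0,\lambda_{\max})$ (direct integration gives one), so $F$ is a bona fide probability distribution by $\sum_k w_k = 1$. Consequently the mixture martingale decomposes as $\int_0^{\lambda_{\max}} E_t(\lambda)\,dF(\lambda) = \sum_k w_k \int_0^{\lambda_{\max}} E_t(\lambda) f_k(\lambda)\,d\lambda$.

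Given a target $q \in (1,2]$, I would set $k(q) = \lceil \log_\eta(q-1) \rceil$, the smallest $k$ with $q(k) = 1 + \eta^k \leq q$, and $\tilde{q} = q(k(q)) \in (1,q]$. Retaining only the $k(q)$-th term in the convex combination yields the pointwise lower bound $\int E_t(\lambda)\,dF(\lambda) \geq w_{k(q)} \int E_t(\lambda) f_{k(q)}(\lambda)\,d\lambda$. Any crossing of $1/\alpha$ by the full mixture therefore implies a crossing of $1/(w_{k(q)} \alpha)$ by the single-component integral, so the mixture boundary of \cref{def:mixboundary} is dominated by the single-component boundary taken at inflated significance $\alpha w_{k(q)}$.

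The next step is to apply \cref{thm:qasym} with exponent $\tilde{q}$ to the density $f_{k(q)}$ at significance $\alpha w_{k(q)}$. The required asymptotic form at the origin is trivially satisfied, since $f_{k(q)}(\lambda) = c \lambda^{\tilde{q}/2-1}$ exactly (with $c = \tilde{q}/(2\lambda_{\max}^{\tilde{q}/2})$ serving as $f(0)$ and the $O(\lambda^{\tilde{q}/2})$ remainder equal to zero); the conditions $\tilde{q} \in (1,2]$ and $\lambda_{\max} \in (0, 1 + W_0(-e^{-2})]$ are inherited from the corollary's hypotheses. Substituting into the conclusion of \cref{thm:qasym} reproduces exactly the announced bound, with the factor $w_{k(q)}$ appearing inside the logarithm.

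Finally, the identities characterizing $\tilde{q}$ come from unwinding the ceiling: writing $k(q) = \log_\eta(q-1) + \Delta(q)$ with $\Delta(q) \in [0,1)$ gives $\eta^{k(q)} = (q-1)\eta^{\Delta(q)}$, whence $\tilde{q} = 1 + (q-1)\eta^{\Delta(q)} = q - (q-1)\bigl(1 - \eta^{\Delta(q)}\bigr) \geq q - (q-1)(1-\eta)$, using $\eta^{\Delta(q)} \geq \eta$. The chief subtlety, and the step worth emphasising, is why one cannot apply \cref{thm:qasym} directly to $F$: its density is a sum of infinitely many competing power-law singularities $\lambda^{q(k)/2-1}$, so it fails to match the clean expansion $f(0)\lambda^{q/2-1} + O(\lambda^{q/2})$ required by that theorem for any single $q$. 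The lower-bound-by-a-single-component trick sidesteps this obstacle at the benign logarithmic cost of replacing $\alpha$ with $\alpha w_{k(q)}$.
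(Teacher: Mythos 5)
Your proposal is correct and matches the paper's intended argument: the paper gives no explicit proof of \cref{cor:qadapt}, but the remark immediately following it (``it ignores the contribution to the mixture wealth from all but one component'') confirms exactly your drop-all-but-the-$k(q)$-th-component strategy, followed by an application of \cref{thm:qasym} at exponent $\tilde q$ and inflated level $\alpha w_{k(q)}$. Your verification of the density hypothesis (exact power-law, zero remainder) and the ceiling arithmetic for $\tilde q$ are both sound.
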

\noindent \cref{cor:qadapt} is conservative: it ignores the contribution to the mixture wealth from all but one component.  Nonetheless,   choosing $w_k = \zeta(r)^{-1} (k + 1)^{-r}$ with $r > 1$
for \cref{cor:qadapt} yields a $\log \log_{\eta}(\tilde{q} - 1)$ degradation in the boundary while adapting to a $\tilde{q}$ which is at most $(1 - \eta)$ closer to 1 than $q$.

\subsection{Computational Considerations}

Computationally, \cref{eqn:runningmeannsm} is less convenient than \cref{eqn:empbern} in several respects:
\begin{enumerate}
    \item Computing \cref{eqn:runningmeannsm} for different $\lambda$ naively requires keeping around all of history, which is $O(t)$ space and time.  By comparison for any $\lambda$, \cref{eqn:empbern} can be computed constant space and time using the sufficient statistic $\sum_{s \leq t} \left(X_s - \hat{X}_s\right)^2$.
    \item The mixture boundary of \cref{def:mixboundary} must be computed numerically for \cref{eqn:runningmeannsm}, whereas \cref{eqn:empbern} has a closed-form conjugate mixture.
\end{enumerate}
We mitigate the first issue by exploiting strong convexity of the variance process, and we mitigate the second issue by using discrete mixtures with early termination.

\paragraph{Approximate Sufficient Statistics} Define $g(\lambda, x) \doteq \lambda x - \log\left(1 + \lambda x\right)$ and write \cref{eqn:runningmeannsm} as $$
\begin{aligned}
\log\left(E_t(\lambda)\right) &= \lambda \sum_{s \leq t} Y_s - \sum_{s \leq t} g\left(\lambda, X_s - \hat{X}_s\right). 
\end{aligned}
$$ If we can upper bound $g(\lambda, x)$, we can lower bound $\log\left(E_t(\lambda)\right)$ and therefore upper bound any resulting boundary.  We use the following upper bound, 
\begin{naligned}
\label{eqn:tildeg}
\tilde{g}(\lambda, x; k) &= \begin{cases} 
\frac{\lambda^2}{2 (1 - \lambda)} x^2 & x \in (-1, 0] \\
\frac{\lambda^2}{2} x^2 & x \in (0, 1] \\
\alpha g(\lambda, x_1) + (1 - \alpha) g(\lambda, x_2) - \frac{1}{2} m \alpha (1 - \alpha) (x_2 - x_1)^2 & x > 1
\end{cases}, 
\end{naligned}
where for $x > 1$, $$
\begin{aligned}
x &\in [x_1, x_2) = [k^n, k^{n+1}) \\
\alpha &\doteq \frac{k^{n+1} - x}{k^{n+1} - k^n}, \\
m &\doteq \frac{\lambda^2}{\left(1 + \lambda x_2\right)^2}, \\
\end{aligned}
$$ with $k > 1$ setting the resolution of our exponential grid of sufficient statistics.

\begin{restatable}[$\tilde g$-approximation]{theorem}{tildegprox}
\label{thm:tildegprox}
For all $\lambda \in [0, 1)$, $x > -1$, and $k > 1$, $$
\begin{aligned}
g(\lambda, x) \leq \tilde{g}(\lambda, x; k) &= g(\lambda, x) + \begin{cases}
O\left(\lambda^3\right) & x \leq 1 \\
O\left(\left(k - 1\right)^3 \lambda^3\right) & x > 1 
\end{cases}.
\end{aligned}
$$ Furthermore $\tilde{g}$ can be computed in $\min\left(3 t, 2 + 3 \log_k\left(\left| X_{s \leq t}\right|_{\infty}\right)\right)$ space and time.
\end{restatable}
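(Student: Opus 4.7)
The plan is to prove the upper bound $g \leq \tilde g$ piecewise on the three regions, then estimate the approximation error on each, and finally verify the sufficient-statistic complexity. \textbf{Case 1 ($-1 < x \leq 0$):} Set $u \doteq -\lambda x \in [0, \lambda)$, so that $g(\lambda, x) = \psi_E(u)$. The series $\psi_E(u) = \sum_{n \geq 2} u^n/n$ together with $1/n \leq 1/2$ gives $\psi_E(u) \leq u^2/(2(1-u))$, hence $g(\lambda, x) \leq \lambda^2 x^2/(2(1+\lambda x)) \leq \lambda^2 x^2/(2(1-\lambda))$ using $1 + \lambda x \geq 1 - \lambda$ from $x > -1$. \textbf{Case 2 ($0 < x \leq 1$):} The elementary inequality $\log(1+y) \geq y - y^2/2$ for $y \geq 0$ applied at $y = \lambda x$ yields $g(\lambda, x) \leq \lambda^2 x^2/2$. \textbf{Case 3 ($x > 1$):} Since $\partial_x^2 g(\lambda, x) = \lambda^2/(1+\lambda x)^2$ is positive and strictly decreasing in $x$, its minimum over $[x_1, x_2] = [k^n, k^{n+1})$ is exactly the quantity $m = \lambda^2/(1+\lambda x_2)^2$ in the statement; thus $g(\lambda, \cdot)$ is $m$-strongly convex there, and the standard strongly-convex interpolation inequality
\[g(\lambda, \alpha x_1 + (1-\alpha) x_2) \leq \alpha g(\lambda, x_1) + (1-\alpha) g(\lambda, x_2) - \tfrac{m}{2}\alpha(1-\alpha)(x_2-x_1)^2\]
reproduces the third branch of $\tilde g$ exactly.

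For the error estimates on the first two regions, the Taylor expansion $g(\lambda, x) = \tfrac{\lambda^2 x^2}{2} - \tfrac{\lambda^3 x^3}{3} + O(\lambda^4)$, uniform for $|x| \leq 1$, shows that the Case 1 gap is $\tfrac{\lambda^3 x^2}{2(1-\lambda)} + \tfrac{\lambda^3 x^3}{3} + O(\lambda^4)$ and the Case 2 gap is $\tfrac{\lambda^3 x^3}{3} + O(\lambda^4)$, both $O(\lambda^3)$. For the $x > 1$ branch I would apply the linear-interpolation error identity
\[\alpha g(\lambda, x_1) + (1-\alpha) g(\lambda, x_2) - g(\lambda, x) = \tfrac{1}{2}\alpha(1-\alpha)(x_2-x_1)^2 \, \partial_x^2 g(\lambda, \xi)\]
for some $\xi \in [x_1, x_2]$, so that the residual after subtracting the strong-convexity correction equals $\tfrac{1}{2}\alpha(1-\alpha)(x_2-x_1)^2 \bigl(\partial_x^2 g(\lambda, \xi) - m\bigr)$. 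The curvature gap is then controlled via $|\partial_x^3 g(\lambda, y)| = 2\lambda^3/(1+\lambda y)^3$ integrated across $[\xi, x_2]$; combined with $x_2 - x_1 = (k-1) x_1$ this yields a dominant term scaling as $(k-1)^3 \lambda^3$.

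For the computational claim I would keep a single running $\sum_s (X_s - \hat X_s)^2$ each for the $(-1, 0]$ and $(0, 1]$ regions, together with three running aggregates (count, sum, and sum of squares of $X_s - \hat X_s$) for each bucket $[k^n, k^{n+1})$, since both the interpolation term $\alpha g(\lambda, x_1) + (1-\alpha) g(\lambda, x_2)$ and the correction $m(x_2 - x)(x - x_1)/2$ depend on the in-bucket data only through those three aggregates. Each new sample updates a single bucket in $O(1)$ time, and the number of populated statistics is bounded by $\min\left(3t, 2 + 3 \log_k\left(\left| X_{s \leq t}\right|_{\infty}\right)\right)$, as claimed.

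The main obstacle is the Case 3 error rate: one has to show that the strong-convexity subtraction cancels the obvious $O((k-1)^2 \lambda^2)$ interpolation residual to leave only $O((k-1)^3 \lambda^3)$. The third-derivative argument above is the cleanest route, but careful bookkeeping is needed so that the factor $x_1/(1+\lambda x_1)$ stays bounded uniformly across buckets and $\lambda \in [0, 1)$.
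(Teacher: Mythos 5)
Your proposal is correct and follows essentially the same structure as the paper's proof: the piecewise log-inequalities for Cases 1 and 2 yield the identical bounds $\tfrac{\lambda^2 x^2}{2(1-\lambda)}$ and $\tfrac{\lambda^2 x^2}{2}$ (via $\psi_E(u)\le u^2/(2(1-u))$ and $\log(1+y)\ge y-y^2/2$ in place of the paper's $\log(1+z)\ge\tfrac{z}{2}\tfrac{2+z}{1+z}$ and $\log(1+z)\ge\tfrac{2z}{2+z}$), and Case 3 is the same strong-convexity interpolation inequality with $m=\lambda^2/(1+\lambda x_2)^2$. The one genuine difference is how you establish the $O((k-1)^3\lambda^3)$ error on $x>1$: the paper brute-force Taylor-expands the residual in $\lambda$ to get $\tfrac{(k-1)^3\alpha(1-\alpha^2)}{3(k+\alpha-k\alpha)^3}x^3\lambda^3+O(\lambda^4)$, whereas you invoke the interpolation-error identity and bound $\partial_x^2 g(\lambda,\xi)-m$ by integrating $|\partial_x^3 g|$ over $[\xi,x_2]$; both give the same scaling and your route is arguably cleaner since it exposes explicitly why the strong-convexity subtraction kills the $O((k-1)^2\lambda^2)$ term. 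Regarding the obstacle you flag at the end: no uniform-in-$x$ bookkeeping is needed. The theorem's $O(\cdot)$ is an asymptotic in $\lambda\to 0$ (and $k\to 1$) with $x$ held fixed — the paper's own leading term carries an explicit $x^3$ factor — so the factor $x_1^3/(1+\lambda x_1)^3\to x_1^3$ is already acceptable; trying to make the constant uniform over buckets would fail and is not required. Your (count, sum, sum-of-squares) aggregates per bucket are an equivalent encoding of the paper's $(z_n,y_n)$ arrays and give the same complexity.
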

\begin{proof}
See \cref{app:computationalconsiderations}.
\end{proof}

\paragraph{Discrete Mixture} Given $\tilde{g}(\lambda, x; k)$, the mixture boundary of \cref{def:mixboundary} for a single $q$ can be computed using numerical quadrature.  In practice this is wasteful because a generic quadrature routine will spend compute on refinements that do not change the decision boundary.  By contrast the discrete mixtures of \citet{howard2021time} can be early terminated.  Combining this with the discrete mixture of \cref{cor:qadapt} and lower-bounding the inner sum leads to the doubly-discrete robust mixture.
\begin{restatable}[DDRM]{theorem}{ddrm}
\label{thm:ddrm}
For fixed $\lambda_{\max} \in (0, 1 + W_0(-e^{-2}) \approx 0.841]$, $\xi > 1$, $r > 1$, $k > 1$ and $\eta \in (0, 1)$, 
$$
\begin{aligned}
\mathcal{DM}_{\alpha}\left(X_{s \leq t}, \hat{X}_{s \leq t}\right) &= \sup\left\{ y \in \mathbb{R} : \sum_{j=0}^\infty z_j \exp\left(\lambda_j y - \sum_{s \leq t} \tilde{g}\left(\lambda_j, X_s - \hat{X}_s; k\right) \right) \leq \frac{1}{\alpha} \right\}, \\
\lambda_j &\doteq \frac{\lambda_{\max}}{\xi^{j+1/2}}, \\
z_j &\doteq \frac{1}{2} \left(\frac{\xi - 1}{\xi^{1 + j}}\right) \left(1 + \frac{1 + \polylog_r(\eta)}{\eta \zeta(r)}\right),
\end{aligned}
$$ where $\zeta(r)$ is the Riemann zeta function and $\polylog_r(\eta)$ is the Jonqui\`ere polylogarithm,
is a time-uniform crossing boundary with probability at most $\alpha$ for $Y_t = \sum_{s \leq t} \left(X_s - \mathbb{E}_{s-1}\left[ X_s \right]\right)$.
\end{restatable}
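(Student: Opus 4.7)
The overall strategy is Ville's inequality: I will exhibit a non-negative supermartingale $M_t$ with $M_0 = 1$ that pointwise dominates $W_t \doteq \sum_j z_j \tilde E_t(\lambda_j)$, where $\tilde E_t(\lambda) \doteq \exp(\lambda Y_t - \sum_{s \leq t} \tilde g(\lambda, X_s - \hat X_s; k))$ and $Y_t = \sum_{s \leq t}(X_s - \mathbb{E}_{s-1}[X_s])$.

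First, each $\tilde E_t(\lambda_j)$ is a non-negative supermartingale. Every $\lambda_j = \lambda_{\max}\xi^{-(j+1/2)}$ lies in $[0, \lambda_{\max}) \subseteq [0, 1)$, so by \cref{def:runningmeannsm} the Heavy NSM $E_t(\lambda_j)$ (defined using $g$ in place of $\tilde g$) is a non-negative supermartingale. By \cref{thm:tildegprox}, $\tilde g \geq g$ pointwise, so the one-step ratio $\tilde E_t(\lambda_j)/\tilde E_{t-1}(\lambda_j) \leq E_t(\lambda_j)/E_{t-1}(\lambda_j)$ pointwise; taking conditional expectations preserves the $\leq 1$ bound, and $\tilde E_0 = 1$.

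Next, set $M_t \doteq \int_0^{\lambda_{\max}} E_t(\lambda)\, dF(\lambda)$ with $F$ the $q$-adaptive probability measure of \cref{cor:qadapt} under the stated $\eta$ and $r$; Tonelli and the fiberwise supermartingale property make $M_t$ a non-negative supermartingale with $M_0 = 1$. The crux is showing $W_t \leq M_t$. Using $\tilde E_t \leq E_t$, it suffices to establish $\sum_j z_j E_t(\lambda_j) \leq M_t$. I partition $(0, \lambda_{\max}]$ into geometric intervals $A_j \doteq [\lambda_{\max}\xi^{-(j+1)}, \lambda_{\max}\xi^{-j}]$, each having $\lambda_j$ as its log-midpoint, and prove $\int_{A_j} E_t(\lambda)\, dF(\lambda) \geq z_j E_t(\lambda_j)$ by combining: (i) log-concavity of $\lambda \mapsto E_t(\lambda)$, inherited from convexity of $\lambda \mapsto g(\lambda, x)$; (ii) the Lebesgue measure of $A_j$, which contributes the geometric factor $(\xi - 1)/\xi^{1+j}$; and (iii) aggregation of the power-law components $\lambda^{q(k)/2 - 1}$ weighted by $w_k = \zeta(r)^{-1}(k+1)^{-r}$ over $q(k) = 1 + \eta^k$, producing the factor $1 + (1 + \polylog_r(\eta))/(\eta\zeta(r))$ via series manipulations closely related to $\sum_k w_k q(k) = 1 + \polylog_r(\eta)/(\eta\zeta(r))$.

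Ville's inequality then yields $\Pr(\exists t: M_t \geq 1/\alpha) \leq \alpha$; since $y \mapsto \sum_j z_j \exp(\lambda_j y - \sum_s \tilde g(\lambda_j, X_s - \hat X_s; k))$ is strictly increasing in $y$, the event $\{Y_t > \mathcal{DM}_\alpha\}$ coincides with $\{W_t > 1/\alpha\}$, which is contained in $\{M_t > 1/\alpha\}$, completing the argument. The main obstacle is the discretization step above: the integrable singularity of $\lambda^{q(k)/2 - 1}$ at $\lambda = 0$ when $q(k) < 2$, and the need to avoid double-counting across the geometric $\lambda$-grid and the $q$-adaptive $k$-grid, require careful bookkeeping. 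Verifying the constant $\tfrac12$ and the precise form of the $k$-aggregated factor demands an exact log-concavity correction at each log-midpoint $\lambda_j$ together with explicit evaluation of the polylogarithmic series.
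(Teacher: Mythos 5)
Your first paragraph is correct and is exactly the ingredient you need: by \cref{thm:tildegprox} we have $\tilde g \geq g$ pointwise, so $\tilde E_t(\lambda_j)/\tilde E_{t-1}(\lambda_j) \leq E_t(\lambda_j)/E_{t-1}(\lambda_j)$, and each $\tilde E_t(\lambda_j)$ is a non-negative supermartingale with $\tilde E_0 = 1$. But from there you take a detour that introduces a genuine gap. You try to dominate $W_t \doteq \sum_j z_j \tilde E_t(\lambda_j)$ by the continuous mixture $M_t = \int E_t(\lambda)\,dF(\lambda)$ via a per-bin inequality $\int_{A_j} E_t(\lambda)\,dF(\lambda) \geq z_j E_t(\lambda_j)$, and you appeal to log-concavity of $\lambda \mapsto E_t(\lambda)$ to get it. That inequality is neither established by your argument nor true in general: log-concavity gives $E_t(\lambda_j) \geq E_t(a)^{1/2} E_t(b)^{1/2}$ for $\lambda_j$ the geometric midpoint of $A_j = [a,b]$, which is the \emph{wrong} direction — for a log-concave function peaked inside $A_j$, the midpoint value can strictly exceed the $F$-average over $A_j$, so $z_j E_t(\lambda_j)$ can exceed $\int_{A_j} E_t\,dF$. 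No amount of ``exact log-concavity correction'' turns a midpoint rule into a lower bound on the integral without additional structure (e.g.\ monotonicity on $A_j$), which you do not have and which depends on the data.

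The per-bin comparison is also unnecessary. What the paper's proof actually does (and what your first paragraph already sets up) is much simpler: since $\{z_j\}$ is constructed as a \emph{lower} bound of $\sum_k w_k y_{j,k}$, where the $y_{j,k}$ are Howard-style discretization weights satisfying $\sum_j y_{j,k} \leq 1$ for each $k$ and $\sum_k w_k = 1$, one has $\sum_j z_j \leq \sum_j \sum_k w_k y_{j,k} \leq 1$. Hence $W_t$ is a sub-probability convex combination of non-negative supermartingales, so it is itself a non-negative supermartingale with $W_0 = \sum_j z_j \leq 1$, and Ville's inequality applies directly to $W_t$; no comparison with a continuous mixture $M_t$ is needed. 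Coverage requires only the sub-probability property of the weights; the connection to the continuous $F$ is only relevant for the (unproved here) width/asymptotics claims. Separately, the series identity you quote, $\sum_k w_k q(k) = 1 + \polylog_r(\eta)/(\eta\zeta(r))$, is correct, and you are right to flag a mismatch with the stated constant $1 + (1 + \polylog_r(\eta))/(\eta\zeta(r))$; however the way to resolve it is by directly evaluating $\zeta(r)^{-1}\sum_{k\geq 0}(k+1)^{-r}(1+\eta^k) = 1 + \polylog_r(\eta)/(\eta\zeta(r))$ and then checking $\sum_j z_j \leq 1$, not by log-concavity ``bookkeeping.''
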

\begin{proof}
See \cref{app:computationalconsiderations}.
\end{proof}
\noindent Note \cref{thm:ddrm} only asserts coverage, and not the other properties of \cref{cor:qadapt}.  Hopefully, approximating \cref{cor:qadapt} preserves the beneficial properties.  In practice we terminate the sum of \cref{thm:ddrm} dynamically, either because the wealth is above the threshold, or because we can upper bound the wealth below the threshold.

\section{Simulations}
\label{sec:simulations}

For demonstration we simulate contextual bandit off-policy estimation.\footnote{Code to reproduce all results available at \codeurl.} Using the framework of \citet{waudby2022} we transform a lower CS on a non-negative random variable with mean upper bounded by 1 into a lower and upper CS for a contextual bandit problem with bounded rewards.  Specifically, the raw observations are tuples of importance-weighted rewards $(W_t, R_t)$ where a.s. $W_t \geq 0$, $\mathbb{E}_t[W_t]=1$, and a.s. $R_t \in [0, 1]$; the lower CS is obtained via $X_t = W_t R_t$; and the upper CS is obtained from one minus the lower CS on $X_t = W_t (1 - R_t)$.  The importance weight and reward distributions can be predictably adversarially chosen, corresponding to any combination of changing environment (e.g., reward per action is changing), changing logging policy (e.g., the logging policy is from an online contextual bandit learning algorithm), and changing policy being evaluated (e.g., the evaluated policy is from an offline contextual bandit learning algorithm).  We simulate an empirical Bernstein CS with gamma conjugate prior and $\rho = 1$; and the DDRM of \cref{thm:ddrm} with $\lambda_{\max} = 1/2$, $\xi = 8/5$, $r = 2$, $k = 3/2$, and $\eta = 0.95$.

\begin{figure}
\centering
\begin{minipage}[t]{.49\textwidth}
  \vskip 0pt
  \centering
  \includegraphics[width=.96\linewidth]{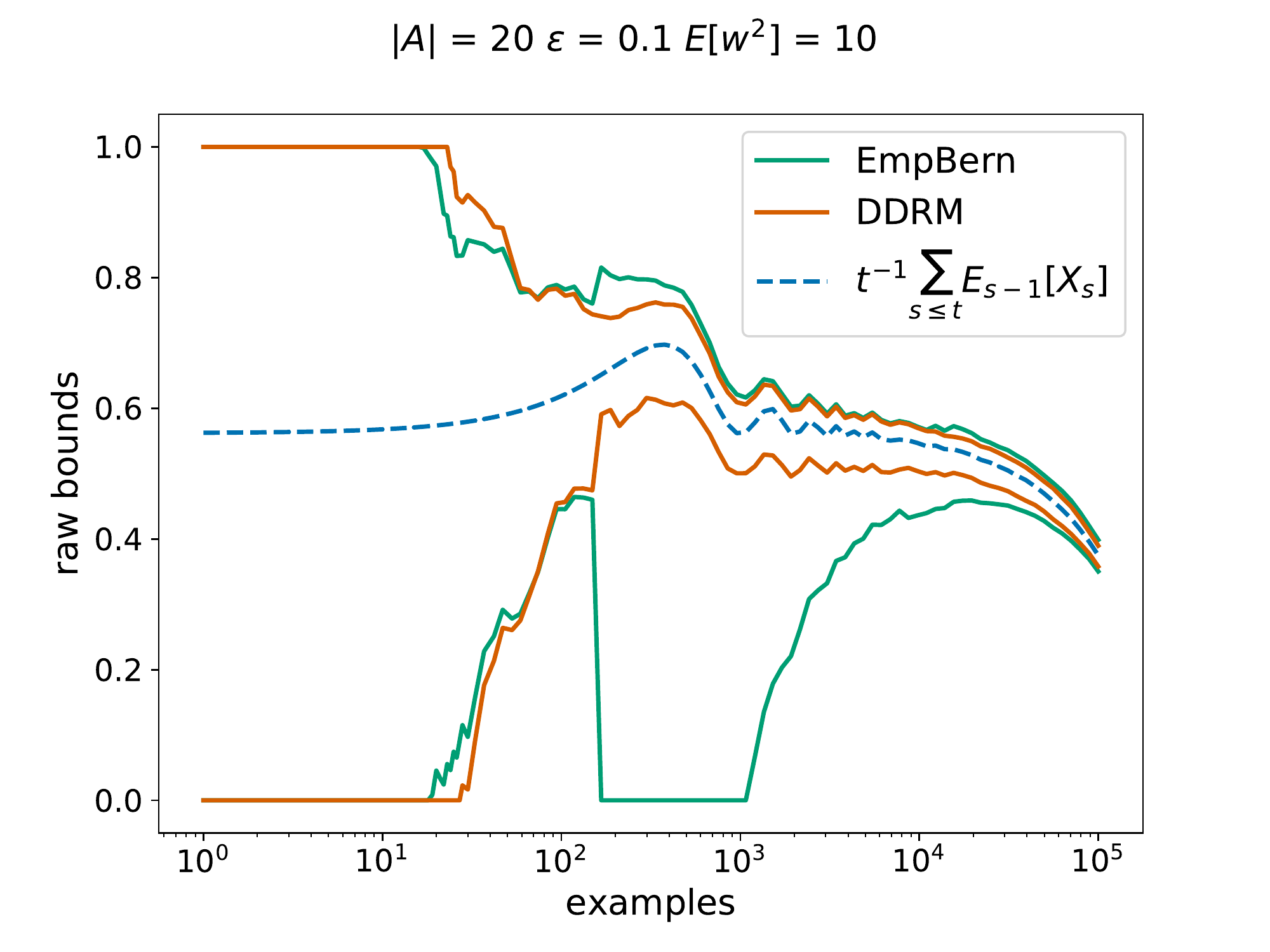}
  \vskip -12pt
  \caption{Epsilon-greedy CB.}
  \label{fig:epsgreedy}
\end{minipage}
\hfill
\begin{minipage}[t]{.49\textwidth}
  \vskip 0pt
  \centering
  \includegraphics[width=.96\linewidth]{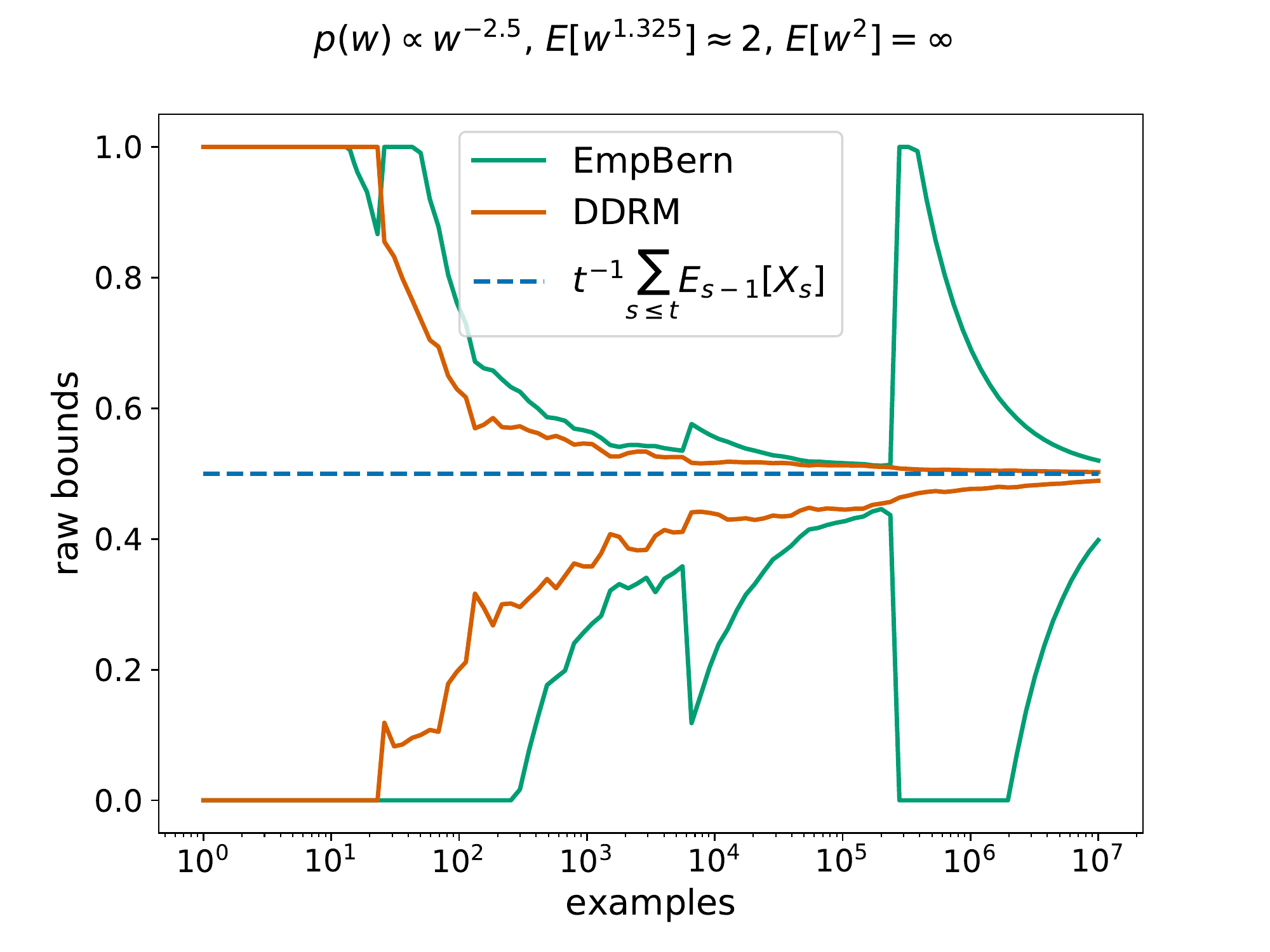}
  \vskip -12pt
  \caption{Continuous action CB.}
  \label{fig:infinitevar}
\end{minipage}
\end{figure}

\cref{fig:epsgreedy} simulates off-policy estimation in an environment where rewards exhibit diurnal seasonality and a negative long-term trend, and the logging policy is $\epsilon$-greedy over a discrete action set.  The number of actions (20) and amount of exploration (0.1) are typical production settings.  The policy being evaluated frequently agrees with the logging policy and hence has importance-weight variance $E[w^2] = 10$ much lower than the worst case of 190, which is typical of policies produced by off-policy learning algorithms.  The occurrence of the largest importance weight at circa 100 examples causes a visible downward adjustment in the lower CS for empirical Bernstein, but DDRM is unaffected.  Thus, even when the variance is finite and the range is bounded, DDRM exhibits beneficial robustness to outliers.  If the lower CS is used for gated deployment of policy improvements as in \citet{karampatziakis2021off}, this increased robustness implies improvements can be reliably deployed more rapidly.  Computationally, for this simulation on the author's laptop, empirical Bernstein can compute a CS over $10^6$ points in circa 0.3ms, whereas DDRM takes circa 20ms: slower but acceptable.

\cref{fig:infinitevar} simulates off-policy estimation in a constant environment and Pareto-distributed importance weights with infinite variance.  This simulation is inspired by off-policy estimation in continuous action spaces.  The empirical Bernstein confidence sequence has no asymptotic width guarantees as the variance process increases superlinearly.  A large realization at circa $10^5$ examples illustrates the issue, as the empirical Bernstein CS is completely reset: if we continue running the simulation forever in infinite precision arithmetic, an infinite number of these complete resets will occur.  By contrast, the DDRM is able to adapt to the (unknown) finite lower moment and converge.

\section{Discussion}

This paper advocates \cref{thm:ddrm} as an always-preferable drop-in replacement for the empirical Bernstein conjugate mixture of \citet{howard2021time}.  Statistically, the underlying martingale and associated mixtures are superior; computationally, approximations mitigate the increased cost.

The structure of \cref{def:runningmeannsm} is intriguing: it contains a betting martingale term, but instead of betting that a fixed parameter is close to the actuals, it bets that the predictions are close to the actuals.  This centers the martingale at the summed conditional mean plus the prediction error: in contrast, betting martingales are either centered at a weighted sum of the conditional means (if a fixed parameter is used), or define feasible sets on the Cartesian product of the parameter space (if a different parameter is used per timestep).

By testing if the running mean of an identification statistic is zero, \cref{cor:qadapt} can be modified to extend the framework of \citet{casgrain2022anytime} to in-hindsight elicitation, i.e., elicitable hypotheses of the running average conditional distribution  $t^{-1} \sum_{s \leq t} \mathrm{Pr}\left(\cdot | \mathcal{F}_s\right)$. For example, expectiles of the average historical distribution are in-hindsight elicitable, but the identification statistic can exhibit infinite variance, e.g., a Pareto distributed observation.  If the mean is bounded, \cref{cor:qadapt} can adapt to an unknown moment bound of the identification statistic.  We will elaborate this in future work.

\bibliographystyle{plainnat}
\bibliography{references.bib}

\newpage
\appendix

\section{Heavy Tailed Results}
\label{app:infinitevariance}

\subsection{Proof of \cref{lemma:qgrowth}}

\qgrowth*
\begin{proof}
The lemma is trivially true for $q = 2$ via \citet{fan2015exponential}.  For $q < 2$, consider
$$
\begin{aligned}
\gamma(x; q) &\doteq \frac{x - \log\left(1 + x\right)}{x^q}, \\
\frac{\partial}{\partial x} \gamma(x; q) &= \frac{x^{1-q}}{1 + x} \left( q (1 + x) \left( x - \log\left(1 + x\right) \right) - x^2 \right).
\end{aligned}
$$ This last expression is positive whenever $$
\begin{aligned}
q &\geq \frac{x^2}{(1 + x) \left(\log\left(1 + x\right) - x \right)} \doteq \beta(x).
\end{aligned}
$$ Noting $\beta(x)$ is strictly decreasing for $x > 0$, this implies $\gamma(x; q)$ is strictly increasing whenever $x \geq x^*(q)$.  Further noting $\lim_{x \to 0} \beta(x) \to 2$, $\lim_{x \to \infty} \beta(x) \to 1$, and the continuity of $\beta(x)$ ensures the uniqueness of $x^*(q)$.

To establish a globally decreasing function, we augment the denominator with $x^2$ for $x < x^*(q)$ and equate them at the boundary. Redefine $\gamma$ as, $$
\begin{aligned}
\gamma(x; q) &\doteq \frac{\log\left(1 + x\right) - x}{1_{x \leq 0} x^2 + 1_{x > 0} \min\left(x^2, c^*(q) x^q\right)}, \\
\end{aligned}
$$ then for $x > -1$ and $\lambda \in \left[0, 1 + W_0\left(-e^{-2}\right)\right]$, $$
\begin{aligned}
-\gamma\left(\lambda x; q\right) &\leq -\gamma\left(-\lambda; q\right) & \left(x > -1\right) \\
&\leq 1, & \left(\lambda \in \left[0, 1 + W_0\left(-e^{-2}\right) \right]\right) \\
\lambda x - \log\left(1 + \lambda x\right) &\leq 1_{x \leq 0} \lambda^2 x^2 + 1_{x > 0} \min\left(\lambda^2 x^2, \lambda^q c^*(q) x^q\right) \\
&\leq \lambda^q \left( 1_{x \leq 0} x^2 + 1_{x > 0} \min\left(x^2, c^*(q) x^q\right) \right). & \left(\lambda \in \left[0, 1\right)\right) \\
\end{aligned}
$$
Here $W_0(z)$ is the principal branch of the Lambert W function and $1 + W_0\left(-e^{-2}\right) \approx 0.841406$.  To establish $\lim_{q \uparrow 2} c(q) = 1$, we expand $\left(\beta^{-1}(q)\right)^{2-q}$ for $q < 2$, $$
\begin{aligned}
c^*(q) = \left(\beta^{-1}(q)\right)^{2-q} &= 1 + \log\left(\frac{3}{2} (2 - q)\right) (2 - q) + O\left(\left(q - 2\right)^2\right). \\
\end{aligned}
$$
\end{proof}

\subsection{Proof of \cref{thm:qasym}}

\qasym*
\begin{proof}
Our object of interest is the boundary $$
\begin{aligned}
\mathcal{M}_{\alpha}\left(X_{\leq t}, \hat{X}_{\leq t}\right) &= 
\sup\left\{ s \in \mathbb{R} : \int_0^{\lambda_{\max}} \exp\left(\lambda s - v(\lambda)\right) dF(\lambda) \leq \frac{1}{\alpha} \right\},
\end{aligned}
$$ where $\alpha \in (0, 1)$, $$
\begin{aligned}
v(\lambda) &\doteq \sum_{s \leq t} \left(\lambda \left(X_s - \hat{X}_s\right) - \log\left(1 + \lambda \left(X_s - \hat{X}_s\right)\right)\right),
\end{aligned}
$$ and $F$ is assumed absolutely continuous with Lebesque measure, and with $\frac{dF}{d\lambda}(\lambda) = f(0) \lambda^{\frac{q}{2}-1} + O\left(\lambda^{q/2}\right)$ in a neighborhood of zero $[0, \zeta)$, with $f(0) > 0$.  From \cref{lemma:qgrowth} we can upper bound the variance, and hence upper bound the boundary, via $$
\begin{aligned}
\mathcal{M}_{\alpha}\left(X_{\leq t}, \hat{X}_{\leq t}\right) \leq \mathcal{M}^{(q)}_{\alpha}(v) &\doteq 
\sup\left\{ s \in \mathbb{R} : \underbrace{\int_0^{\min(\zeta, \lambda_{\max})} \exp\left(\lambda s - \lambda^q v\right) dF(\lambda)}_{\doteq m_q(s, v)} \leq \frac{1}{\alpha} \right\},
\end{aligned}
$$
where $\mathcal{M}^{(q)}_{\alpha}(v)$ will be evaluated at $$
\begin{aligned}
v &\doteq \sum_{s \leq t} 1_{X_s \leq \hat{X}_s} \left(X_s - \hat{X}_s\right)^2 + 1_{X_s > \hat{X}_s} \min\left(\left(X_s - \hat{X}_s\right)^2, c^*(q) \left(X_s - \hat{X}_s\right)^q\right),
\end{aligned}
$$ and we have further upper bounded the boundary by restricting to the absolutely continuous neighborhood.  Our proof steps are as follows:
\begin{enumerate}
    \item Sandwich $v^{1/q} = o\left(\mathcal{M}^{(q)}_{\alpha}(v)\right)$ via  \cref{lemma:lowersandwich} and $\mathcal{M}^{(q)}_{\alpha}(v) = o\left(v\right)$ via \cref{lemma:uppersandwich}.  This is intuitive given the $q$-LIL rate of $O\left(v^{1/q} \left(\log \log v\right)^{(q-1)/q}\right)$ from Theorem 5.1 of \citet{shao1997self}.
    \item Change variables $z \leftarrow \lambda^{2/q}$ and apply a modified Laplace method obtained by customizing the proof technique of \citet{fulks1951generalization}, realized as \cref{thm:customfulks}. The constraint on our choice of $F$ is a consequence of the change of variables formula, as we require non-zero finite density at the origin in the transformed space.  
    \item Invert the boundary: the sandwich lemmas, combined with continuity of the integral wrt $s$ and the intermediate value theorem, allows us to assert an exact boundary.
\end{enumerate}
Inverting the boundary, as $v \to \infty$, $$
\begin{aligned}
\frac{1}{\alpha} &= m\left( \mathcal{M}^{(q)}_{\alpha}(v) , v \right) \\
&\geq \frac{2}{q} f(0) \sqrt{\frac{2 \pi q}{4 (q - 1) v}} \exp\left(c(q) \left(\frac{\mathcal{M}^{(q)}_{\alpha}(v) }{v^{1/q}}\right)^{\frac{q}{q-1}}\right) \left( 1 + o(1) \right), \\
&\doteq f(0) a(q) \frac{1}{\sqrt{v}} \exp\left( \left(\frac{\mathcal{M}^{(q)}_{\alpha}(v)}{v^{1/q}}\right)^{\frac{q}{q-1}}\right) \left( 1 + o(1) \right), \\
\mathcal{M}^{(q)}_{\alpha}(v) &\leq  v^{1/q} \left( \log\left(\frac{\sqrt{v}}{\alpha f(0) a(q) \left(1 + o(1)\right)}\right) \right)^{\frac{q-1}{q}}, \\
\mathcal{M}_{\alpha}\left(X_{\leq t}, \hat{X}_{\leq t}\right) &\leq  v^{1/q} \left( \log\left(\frac{\sqrt{v}}{\alpha f(0) a(q) \left(1 + o(1)\right)}\right) \right)^{\frac{q-1}{q}}.
\end{aligned}
$$
Supporting propositions follow.
\end{proof}

\subsection{$v^{1/q} = o\left(\mathcal{M}^{(q)}_{\alpha}(v)\right)$}

First we restate Theorem 4.1 of \citet{olver1997asymptotics}.
\begin{manualtheorem}{Olver 4.1}[\citet{olver1997asymptotics} pg. 332] 
\label{thm:olver41}
Let 
$$
I(x) = \int_0^{\lambda_{\max}} \exp\left(-x p(\lambda) + x^{b/c} r(\lambda) \right) q(\lambda)\, d\lambda,
$$ and assume that 
\begin{enumerate}
    \item In the interval $(0, \lambda_{\max}]$, $p'(\lambda)$ is continuous and positive, and the real or complex functions $q(\lambda)$ and $r(\lambda)$ are continuous.
    \item As $\lambda \downarrow 0$, $$
    \begin{aligned}
    p(\lambda) &= p(0) + P \lambda^c + O\left(\lambda^{c_1}\right), & p'(\lambda) &= c P \lambda^{c-1} + O\left(\lambda^{c_1-1}\right), \\
    q(\lambda) &= Q \lambda^{a-1} + O\left(\lambda^{a_1-1}\right), & r(\lambda) &= R \lambda^b + O\left(\lambda^{b_1}\right)
    \end{aligned}
    $$
    where $$
    \begin{aligned}
    P > 0,\qquad c_1 > c > 0, \qquad a_1 > a > 0, \qquad c > b \geq 0, \qquad b_1 > b.
    \end{aligned}
    $$
\end{enumerate}
Then $$
\begin{aligned}
I(x) &= \frac{Q}{c} \faxen\left(\frac{b}{c}, \frac{a}{c}, \frac{R}{P^{b/c}}\right) \frac{e^{-x p(0)}}{(P x)^{a/c}} \left(1 + O\left(\frac{1}{x^{d/c}}\right) \right) & \left(x \to \infty \right)
\end{aligned}
$$
where $d = \min\left(a_1 - a, c_1 - c, b_1 - b\right)$ and
$$
\begin{aligned}
\faxen(\alpha, \beta, y) &\doteq \int_0^\infty \exp\left(-\tau + y \tau^\alpha\right) \tau^{\beta-1} d\tau & \left(0 \leq \mathrm{Re}(\alpha) < 1, \mathrm{Re}(\beta) > 0 \right)
\end{aligned}
$$ is the Fax\'en integral.
\end{manualtheorem}
\begin{lemma}
\label{lemma:lowersandwich}
$v^{1/q} = o\left(\mathcal{M}^{(q)}_{\alpha}(v)\right)$
\end{lemma}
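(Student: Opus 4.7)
The plan is to exhibit, for every constant $C > 0$, a threshold $v_0(C)$ above which evaluating the mixture at $s = C v^{1/q}$ already leaves enough budget, i.e.\ $m_q(C v^{1/q}, v) < 1/\alpha$. Since $s \mapsto m_q(s, v)$ is strictly increasing and continuous, this forces $\mathcal{M}^{(q)}_{\alpha}(v) \geq C v^{1/q}$ for all $v \geq v_0(C)$, and because $C$ was arbitrary we obtain $\mathcal{M}^{(q)}_{\alpha}(v) / v^{1/q} \to \infty$, which is precisely $v^{1/q} = o(\mathcal{M}^{(q)}_{\alpha}(v))$.

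The core of the argument is therefore the asymptotic estimate $m_q(C v^{1/q}, v) \to 0$. The plan is to invoke \cref{thm:olver41} directly on
$$
m_q(C v^{1/q}, v) = \int_0^{\min(\zeta,\lambda_{\max})} \exp\bigl(-v \lambda^q + v^{1/q} (C \lambda) \bigr)\, \frac{dF}{d\lambda}(\lambda)\, d\lambda,
$$
reading off the Olver parameters as $x = v$, $p(\lambda) = \lambda^q$ (so $p(0) = 0$, $P = 1$, $c = q$), $r(\lambda) = C\lambda$ (so $R = C$, $b = 1$, which meets $c > b \geq 0$ since $q > 1$), and $q(\lambda) = \frac{dF}{d\lambda}(\lambda) = f(0)\lambda^{q/2 - 1} + O(\lambda^{q/2})$ (so $Q = f(0)$, $a = q/2$). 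The hypotheses $P > 0$, $a > 0$, $c_1 > c$, $a_1 > a$, $b_1 > b$ all hold under the standing assumption on $F$, which is why the integrable singularity $\lambda^{q/2 - 1}$ is exactly what is required.

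With these identifications \cref{thm:olver41} yields
$$
m_q(C v^{1/q}, v) = \frac{f(0)}{q}\, \faxen\!\left(\tfrac{1}{q},\, \tfrac{1}{2},\, C\right) v^{-1/2}\, \bigl(1 + O(v^{-d/q})\bigr),
$$
and since the Fax\'en integral is finite for $0 \leq 1/q < 1$ and $1/2 > 0$, the right-hand side is $O(v^{-1/2})$, hence eventually below $1/\alpha$. Combined with step one this completes the sandwich.

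I expect the only non-trivial obstacle is bookkeeping, namely checking that the exponents in \cref{thm:olver41} match and that Olver's theorem does tolerate the finite upper limit $\min(\zeta, \lambda_{\max})$ rather than $\infty$ (it does, since $p$ is strictly increasing and the contribution away from zero is $O(e^{-c v})$ for some $c > 0$ and hence absorbed into the error term). The change-of-variables intuition $\lambda = u v^{-1/q}$ also gives a direct self-contained verification if one prefers not to black-box Olver: the integrand becomes $v^{-1/2}$ times $\exp(Cu - u^q)\, u^{q/2 - 1}$ (up to the $O(\cdot)$ remainder in $f$), integrated over a range expanding to $(0, \infty)$, and dominated convergence gives the same conclusion.
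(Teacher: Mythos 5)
Your proposal is correct and follows essentially the same route as the paper: both invoke \cref{thm:olver41} with the identifications $x \leftarrow v$, $p(\lambda) \leftarrow \lambda^q$, $r(\lambda) \leftarrow \kappa\lambda$, $q(\lambda) \leftarrow dF/d\lambda$, obtain $m_q(\kappa v^{1/q}, v) \sim \tfrac{f(0)}{q}\faxen(1/q,1/2,\kappa)\,v^{-1/2}$, and conclude from the finiteness of the Fax\'en integral that this drops below $1/\alpha$ for any fixed $\kappa$. You merely spell out the monotonicity-of-$s\mapsto m_q(s,v)$ step that the paper leaves implicit.
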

\begin{proof}
For any $\kappa > 0$, to apply \cref{thm:olver41} to $m_q\left(\kappa v^{1/q}, v\right)$ establish the correspondences: $$
\begin{aligned}
x &\leftarrow v \\
p(\lambda) &\leftarrow \lambda^q & \left(P = 1, c = q, c_1 = q + 1\right), \\
r(\lambda) &\leftarrow \kappa \lambda & \left(R = \kappa, b = 1, b_1 = 2\right), \\
q(\lambda) &\leftarrow \frac{dF}{d\lambda}(\lambda) & \left(Q = f(0), a = \frac{q}{2}, a_1 = a + 1\right), \\
d &\leftarrow \min\left(1, (q+1) - q, 2-1\right) = 1,
\end{aligned}
$$ resulting in $$
\begin{aligned}
m_q\left(\kappa v^{1/q}, v\right) &= \frac{f(0)}{q} \faxen\left(\frac{1}{q}, \frac{1}{2}, \kappa\right) \frac{1}{v^{1/2}} \left(1 + O\left(\frac{1}{v^{1/q}}\right) \right) & \left(v \to \infty \right).
\end{aligned}
$$ 
Using known properties of the Fax\'en integral~\citep{kaminski1997asymptotics}, $$
\begin{aligned}
\faxen(a, b, 0) &= \Gamma(b), \\
\lim_{\kappa \to \infty} \faxen(a, b, \kappa) &= \sqrt{\frac{2 \pi}{1 - a}} \left(a \kappa\right)^{(2 b - 1) / (2 - 2 a)} \exp\left(\left(1 - a\right) \left(a^a \kappa\right)^{1/(1-a)}\right) + o\left(1\right),
\end{aligned}
$$ we find for sufficiently large $v$ that $m_q\left(\kappa v^{1/q}, v\right) \leq \frac{1}{\alpha}$.
\end{proof}

\subsection{$\mathcal{M}^{(q)}_{\alpha}(v) = o\left(v\right)$}

First we restate Theorem 2.1 of \citet{olver1997asymptotics}.
\begin{manualtheorem}{Olver 2.1}[\citet{olver1997asymptotics} pg. 326] 
\label{thm:olver21}
Let $\lambda_{\max}$ and $X$ be fixed positive numbers, and $$
I(x) = \int_0^{\lambda_{\max}} \exp\left(-x p(\lambda) + r(x, \lambda) \right) q(x, \lambda)\, d\lambda,
$$ 
\begin{enumerate}
    \item $p'(\lambda)$ is continuous and positive in $(0, \lambda_{\max}]$ with $$
\begin{aligned} 
p(\lambda) &= p(0) + P \lambda^b + O\left(\lambda^{b_1}\right), \\
p'(\lambda) &= b P \lambda^{b-1} + O\left(\lambda^{b_1-1}\right),
\end{aligned}
$$ with $P > 0$ and $b_1 > b > 0$.
    \item For all $x \in [X, \infty)$, $r(x, \lambda)$ and $q(x, \lambda)$ are continuous for $\lambda \in (0, \lambda_{\max}]$.  Moreover, $$
\begin{aligned} 
\left| r(x, \lambda) \right| \leq R x^c \lambda^d,&\qquad  \left| q(x, \lambda) - Q \lambda^{a-1} \right| \leq Q_1 x^g \lambda^{a_1 - 1}, \\
\end{aligned}
$$ where $R$, $c$, $d$, $Q$, $a$, $Q_1$, $g$, and $a_1$ are independent of $x$ and $\lambda$, and 
$$
\begin{aligned}
d \geq 0,&\qquad a > 0, &\qquad a_1 > 0, &\qquad c < \min(1, d/b), &\qquad g < (a_1 - a) / b.
\end{aligned}
$$
\end{enumerate}
Then $$
\begin{aligned}
I(x) &= \frac{Q}{b} \Gamma\left(\frac{a}{b}\right) \frac{e^{-x p(0)}}{(P x)^{a/b}} \left(1 + O\left(\frac{1}{x^{h/b}}\right)\right) & \left(x \to \infty\right)
\end{aligned}
$$ where $h = \min(b_1 - b, d - b c, a_1 - a - b g)$.

Furthermore if $r(x, \lambda)$ is real and non-positive then the conditions relax to $c < d/b$.
\end{manualtheorem}

\begin{lemma}
\label{lemma:uppersandwich}
$\mathcal{M}^{(q)}_{\alpha}(v) = o\left(v\right)$
\end{lemma}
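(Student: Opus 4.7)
The plan is to prove $\mathcal{M}^{(q)}_\alpha(v) = o(v)$ by establishing the equivalent statement: for every sufficiently small $\kappa > 0$, $m_q(\kappa v, v) \to \infty$ as $v \to \infty$. Because $s \mapsto m_q(s, v)$ is continuous and strictly increasing (the density of $F$ is strictly positive on a neighbourhood of zero), once $m_q(\kappa v, v) > 1/\alpha$ the supremum in \cref{def:mixboundary} forces $\mathcal{M}^{(q)}_\alpha(v) \leq \kappa v$; taking $\kappa \downarrow 0$ then yields $\mathcal{M}^{(q)}_\alpha(v)/v \to 0$.

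The key step is to exploit the fact that the exponent $\kappa \lambda - \lambda^q$ is concave with a strictly positive interior maximum at $\lambda^*_\kappa \doteq (\kappa/q)^{1/(q-1)}$, achieving peak value $\frac{q-1}{q\, q^{1/(q-1)}} \kappa^{q/(q-1)} > 0$. For $\kappa$ small enough, $\lambda^*_\kappa$ lies inside both $(0, \min(\zeta, \lambda_{\max}))$ and the neighbourhood of zero on which the density expansion $\frac{dF}{d\lambda}(\lambda) = f(0)\lambda^{q/2 - 1} + O(\lambda^{q/2})$ with $f(0) > 0$ is valid. Restricting the integrand defining $m_q$ to the subinterval $[\lambda^*_\kappa/2, \lambda^*_\kappa]$, on which $\kappa \lambda - \lambda^q$ is monotonically increasing and hence bounded below by the strictly positive constant $c(\kappa) \doteq \kappa \lambda^*_\kappa/2 - (\lambda^*_\kappa/2)^q$, yields the crude lower bound
$$
m_q(\kappa v, v) \;\geq\; \bigl(F(\lambda^*_\kappa) - F(\lambda^*_\kappa/2)\bigr) \, e^{v\, c(\kappa)},
$$
whose prefactor is strictly positive by the density assumption and whose exponential factor drives the whole quantity to infinity.

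This approach sidesteps a direct application of Olver's Theorem~2.1, which characterises the regime where the integral is dominated by its behaviour at the origin rather than by an interior maximum. One could instead translate the integral to centre at $\lambda^*_\kappa$ and apply Olver~2.1 to produce a sharp Laplace expansion, but this is overkill given that only a sublinear upper bound on the boundary is required. The only delicate bookkeeping is verifying that $\kappa$ can be taken small enough that $\lambda^*_\kappa$ simultaneously fits inside the domain of integration and inside the density-expansion neighbourhood; this is automatic since $\lambda^*_\kappa \to 0$ as $\kappa \to 0$, so no substantive obstacle arises.
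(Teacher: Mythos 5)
Your argument is correct, but it proceeds by a genuinely different route than the paper. The paper does not attempt a direct Laplace lower bound on $m_q(\kappa v, v)$ at all. Instead it introduces the \emph{inverse wealth} $m_q^{(-1)}(s, v) = \int_0^{\lambda_{\max}} \exp(-\lambda s + \lambda^q v)\, dF(\lambda)$, applies Olver's Theorem~2.1 (Laplace asymptotics dominated by the endpoint $\lambda = 0$, where the exponent $-v(\kappa\lambda - \lambda^q)$ is minimized) with $p(\lambda) = \kappa\lambda - \lambda^q$, $P = \kappa$, $b = 1$, $b_1 = q$ to conclude $m_q^{(-1)}(\kappa v, v) = f(0)\Gamma(q/2)\kappa^{-q/2}v^{-q/2}(1 + O(v^{-(q-1)})) \to 0$, and then invokes Jensen's inequality for the convex map $x \mapsto 1/x$ under the probability measure $F$ to get $1/m_q(\kappa v, v) \leq m_q^{(-1)}(\kappa v, v) \leq \alpha$, i.e.\ $m_q(\kappa v, v) \geq 1/\alpha$. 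Your proof skips both the Jensen step and Olver's theorem: you observe that the exponent $v(\kappa\lambda - \lambda^q)$ has a strictly positive interior maximum at $\lambda^*_\kappa = (\kappa/q)^{1/(q-1)}$, that $\lambda^*_\kappa \to 0$ as $\kappa \to 0$ so it eventually falls inside both the integration domain $(0,\min(\zeta,\lambda_{\max}))$ and the neighbourhood where $f(0)>0$ guarantees positive mass, and that restricting the integral to $[\lambda^*_\kappa/2, \lambda^*_\kappa]$ yields $m_q(\kappa v, v) \geq (F(\lambda^*_\kappa) - F(\lambda^*_\kappa/2)) e^{v c(\kappa)} \to \infty$ with $c(\kappa) > 0$. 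Your version is more elementary and even gives a stronger conclusion (divergence rather than merely exceeding $1/\alpha$); the paper's version is likely preferred for uniformity of toolkit, since Olver's expansions are already used in \cref{lemma:lowersandwich} and \cref{thm:customfulks}, and its precise $v^{-q/2}$ rate on the inverse wealth is suggestive even though only the sign of the exponent matters here. One small point of hygiene: when you assert strictly positive mass on $[\lambda^*_\kappa/2, \lambda^*_\kappa]$, you should note explicitly that this uses the density expansion being valid (with $f(0)>0$) on a fixed neighbourhood of zero, so that for all sufficiently small $\kappa$ the interval lies inside it; you flag this in your last paragraph, so the argument is complete.
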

\begin{proof}
Consider the expected value of the inverse wealth, $$
\begin{aligned}
m^{(-1)}_q(s, v) &\doteq  \int_0^{\lambda_{\max}} \exp\left(-\lambda s + \lambda^q v\right) dF(\lambda).
\end{aligned}
$$ For any $\kappa > 0$, to apply \cref{thm:olver21} to $m^{(-1)}_q\left(\kappa v, v\right)$, establish the following correspondences: $$
\begin{aligned}
x &\leftarrow v, \\
p(\lambda) &\leftarrow \kappa \lambda - \lambda^q, & \left(P = \kappa, b = 1, b_1 = q\right) \\
r(x, \lambda) &\leftarrow 0, & \left(R = 0, c = 0, d = 1\right) \\
q(x, \lambda) &\leftarrow \frac{dF}{d\lambda}(\lambda), & \left(Q = f(0), a = \frac{q}{2}, g = 0, a_1 = 2, Q_1 = f'(0)\right) \\
h &\leftarrow \min\left(q - 1, 1 - 1 \cdot 0, 2 - \frac{q}{2} - 1 \cdot 0\right) = q - 1,
\end{aligned}
$$ where $c < \min(1, d/b) = 1$ holds. This results in $$
\begin{aligned}
m^{(-1)}_q\left(v, v\right) &= f(0) \Gamma\left(\frac{q}{2}\right) \frac{1}{\kappa^{q/2} v^{q/2}} \left(1 + O\left(\frac{1}{v^{q-1}}\right)\right) & \left(v \to \infty\right).
\end{aligned}
$$ Thus for sufficiently large $v$, $m^{(-1)}_q\left(\kappa v, v\right) \leq \alpha$.  From Jensen's inequality, $$
\frac{1}{m_q\left(\kappa v, v\right)} \leq m^{(-1)}_q\left(\kappa v, v\right) \leq \alpha \implies m_q\left(\kappa v, v\right) \geq \frac{1}{\alpha}.
$$
\end{proof}

\subsection{Customized \citet{fulks1951generalization}}

Ideally we could reuse Theorem 4 of \citet{fulks1951generalization} directly, but this was not possible.  Instead we follow the same line of argument, changing one critical step to accommodate our scenario.

\begin{theorem}[Customization of \citet{fulks1951generalization} Theorem 4]
\label{thm:customfulks}
For $1 < q < 2$ and $\eta > 0$, let 
$$
\begin{aligned}
m(s, v) &\doteq \int_0^\eta \exp\left(\lambda s - \lambda^q v\right) dF(\lambda),
\end{aligned}
$$ where $F$ is absolutely continuous with Lebesque measure, with $\frac{dF}{d\lambda}(\lambda) = f(0) \lambda^{\frac{q}{2}-1} + O\left(\lambda^{\frac{q}{2}}\right)$ and $f(0) > 0$.  For $s, v \to \infty$, with $s = o(v)$ and $v^{1/q} = o(s)$, we have $$
\begin{aligned}
m(s, v) &= \frac{2}{q} f(0) \sqrt{\frac{2 \pi q}{4 (q - 1) v}} \exp\left(c(q) \left(\frac{s}{v^{1/q}}\right)^{\frac{q}{q-1}}\right) \left[ 1 + o(1), \frac{1}{2 \sqrt{1 - \log(2)}} + \frac{q}{4(q - 1)} + o(1) \right],
\end{aligned}
$$ where $$
\begin{aligned}
c(q) &\doteq \left(\frac{1}{q}^{\frac{1}{q-1}} - \frac{1}{q}^{\frac{q}{q-1}}\right) > 0.
\end{aligned}
$$
\end{theorem}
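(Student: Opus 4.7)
The plan is to apply Laplace's method at the interior saddle of the exponent $\lambda s - \lambda^q v$, with careful handling of two features that prevent a direct application of textbook Laplace: the density $dF/d\lambda$ has an integrable singularity $\sim f(0) \lambda^{q/2-1}$ at zero, and the saddle $\lambda^* = (s/(qv))^{1/(q-1)}$ drifts into this singular region under our regime. The first step is to verify that the scaling $s = o(v)$, $v^{1/q} = o(s)$ simultaneously forces $\lambda^* \to 0$ and $A \doteq (\lambda^*)^q v = q^{-1} \lambda^* s \to \infty$, and to compute the saddle value of the exponent, which telescopes to exactly $c(q)(s/v^{1/q})^{q/(q-1)}$ and so reproduces the exponential factor in the claim.

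Next I rescale $\lambda = \lambda^* \mu$ so that the exponent becomes $A(q \mu - \mu^q)$ with unique maximum $A(q-1)$ at $\mu = 1$ and second derivative $-A q (q-1)$ there; the further substitution $\mu = 1 + w/\sqrt{A q (q-1)}$ yields a Gaussian factor $e^{-w^2/2}$ plus cubic corrections of order $O(1/\sqrt{A}) = o(1)$. The density contributes $f(0) (\lambda^*)^{q/2 - 1} \mu^{q/2 - 1}(1 + o(1))$, and the pivotal algebraic cancellation $(\lambda^*)^{q/2}/\sqrt{A} = 1/\sqrt{v}$ eliminates all dependence on $\lambda^*$ from the prefactor, producing exactly the claimed $\sqrt{2 \pi/(q(q-1) v)} = \frac{2}{q} \sqrt{2 \pi q/(4 (q-1) v)}$ factor. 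Integrating this Gaussian on a principal window $|w| \leq w_n$ with $w_n \to \infty$ sufficiently slowly delivers the tight lower constant $1 + o(1)$.

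The upper bound requires a decomposition of $\mu \in (0, \eta/\lambda^*)$ into the principal window, a low-$\mu$ region where the density singularity is active, and a high-$\mu$ tail. The high-$\mu$ tail has an exponentially suppressed integrand that easily dominates any power growth of the density. The low-$\mu$ region is precisely the step where the original argument of \citet{fulks1951generalization} does not carry over: his proof assumes a bounded density at zero, so here one must replace his bound by the power-law envelope $dF/d\lambda \leq C \lambda^{q/2 - 1}$ and integrate against $e^{\lambda s}$ explicitly, verifying that the contribution is bounded by an explicit constant factor times the leading term. The main obstacle is that this envelope is loose and yields only the explicit constant $\frac{1}{2 \sqrt{1 - \log(2)}} + q/(4(q-1))$ in the upper endpoint rather than a matching $1 + o(1)$; I expect the $1 - \log(2)$ factor to arise from choosing a sub-window on which $q\mu - \mu^q \geq (q-1)(1 - \log(2))/2$ so that the remainder admits a sharp Gaussian tail estimate, while $q/(4(q-1))$ tracks the contribution of the $O(\lambda^{q/2})$ density correction integrated over the principal region.
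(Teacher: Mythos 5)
Your rescaling approach — set $\lambda^* = (s/(qv))^{1/(q-1)}$, substitute $\lambda = \lambda^*\mu$ and then $\mu = 1 + w/\sqrt{Aq(q-1)}$ with $A = (\lambda^*)^q v$ — is a genuinely different route from the paper's. The paper first changes variables $t = \lambda^{q/2}$, which turns the singular density $f(0)\lambda^{q/2-1} + O(\lambda^{q/2})$ into a \emph{regular} one $g(t) = \tfrac{2}{q}f(0) + O(t)$ and turns the $\lambda^q v$ term into an exact Gaussian $t^2 v$, before doing Laplace around the transformed saddle $\tau$. Your approach keeps the singularity and the non-quadratic exponent $A(q\mu - \mu^q)$ and argues the density is essentially constant on the principal window. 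Your saddle value telescoping to $c(q)(s/v^{1/q})^{q/(q-1)}$, the scaling observations $\lambda^* \to 0$ and $A \to \infty$, and the prefactor cancellation $(\lambda^*)^{q/2}/\sqrt{A} = v^{-1/2}$ giving $f(0)\sqrt{2\pi/(q(q-1)v)} = \tfrac{2}{q}f(0)\sqrt{2\pi q/(4(q-1)v)}$ are all correct and cleanly confirm the paper's exponential and leading prefactor. The lower endpoint $1 + o(1)$ via a slowly widening window $|w| \le w_n$ is sound.

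The gap is the upper endpoint. The theorem asserts a specific non-tight multiplicative constant $\tfrac{1}{2\sqrt{1-\log 2}} + \tfrac{q}{4(q-1)} + o(1)$, and you must actually produce it (or something no larger). You leave this as a conjecture, and your conjectured mechanism does not match what actually generates those numbers. In the paper, \emph{both} pieces come from sandwiching the second derivative of the \emph{transformed} exponent on the two halves of the window around $\tau$: on $[\tau - \tau/2, \tau]$ the curvature coefficient lies in $[h_0, h_1]$ with $h_1 = 4(q-1)/q$ the value at the saddle, and the $1 - \log 2$ is the worst-case (as $q \downarrow 1$) lower bound on $h_0/h_1$; on $[\tau, b]$ the coefficient lies in $[h_1, h_2]$ with $h_2 = 2 + o(1)$, giving the $q/(2(q-1))$ factor, halved after combining. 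Crucially, the density's $O(\lambda^{q/2})$ correction is isolated in a separate remainder ($I_6$ in the paper) and contributes only $o(1)$, not $q/(4(q-1))$ as you posit. Your alternative guess — that $1-\log 2$ comes from a sub-window threshold on $q\mu - \mu^q$ — also does not reproduce the paper's bound. Because the exponent $q\mu - \mu^q$ is asymmetric about $\mu = 1$ with curvature $-Aq(q-1)\mu^{q-2}$ varying by an unbounded factor as $q \downarrow 1$ over any fixed window, you would need to carry out the analogous left/right curvature sandwich in your $\mu$-coordinates and verify the resulting constant does not exceed $\tfrac{1}{2\sqrt{1-\log 2}} + \tfrac{q}{4(q-1)}$; this is not obviously true and is not done. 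Until that upper-bound bookkeeping is carried out, the proof is incomplete.
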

\begin{proof}
First we change the integration variable,
$$
\begin{aligned}
m(s, v) &= \int_0^{b} \exp\left(t^{2/q} s - t^2 v\right) dG(t) & \left(t \leftarrow \lambda^{q/2}\right), \\
\end{aligned}
$$ where $b = \eta^{q/2}$ and $dG(t) = \frac{2}{q} t^{\frac{2}{q} - 1} dF(t^{2/q})$.  Note $g(t) = \frac{2}{q} f(0) + O\left(t\right)$.  Next we define the value $\tau$ as\footnote{Contrasting with \citet{fulks1951generalization}, we exactly compute $\tau$ here, which allows us to proceed.} $$
\begin{aligned}
0 &= \frac{\partial}{\partial t}\left(t^{2/q} s - t^2 v\right)\biggr|_{t=\tau} \implies \tau = \left(\frac{1}{q}\right)^{\frac{q}{2 (q - 1)}} \left(\frac{s}{v}\right)^{\frac{q}{2 (q - 1)}}.
\end{aligned}
$$ Since $s = o(v)$ we have $\tau = o(1)$.  We also have  
$$
\begin{aligned}
\tau v^{1/2} &= \left(\frac{1}{q}\right)^{\frac{q}{2 (q - 1)}}\left(\frac{s}{v^{1/q}}\right)^{\frac{q}{2 (q - 1)}} \to \infty, \\
\left(t^{2/q} s - t^2 v\right)\biggr|_{t=\tau} &= \left(\frac{1}{q}^{\frac{1}{q-1}} - \frac{1}{q}^{\frac{q}{q-1}}\right) \left(\frac{s}{v^{1/q}}\right)^{\frac{q}{q-1}} \doteq c(q) \left(\frac{s}{v^{1/q}}\right)^{\frac{q}{q-1}} \to \infty,
\end{aligned}
$$ since $v^{1/q} = o(s)$.  Choose $s, v$ large enough such that $\tau < b$.  We start by assuming $dG(t)$ is constant and relax this assumption later.  Consider the centered integrand, $$
\begin{aligned}
I_1 &= \int_0^b \exp\left(\left(t^{2/q} - \tau^{2/q}\right) s - \left(t^2 - \tau^2\right) v\right) dt \\
&= \int_0^{\tau - \delta_1} \ldots + \int_{\tau - \delta_1}^{b} \ldots \doteq I_2 + I_3.
\end{aligned}
$$ For $t > \left(\frac{2-q}{q}\right)^{\frac{q}{2 (q - 1)}} \tau$ the exponent is strictly concave.  Noting $$
\sup_{q \in (1, 2]} \left(\frac{2-q}{q}\right)^{\frac{q}{2 (q - 1)}} = e^{-1},
$$ we take $\delta_1 = \frac{1}{2} \tau$, allowing us to crudely bound $I_2 \leq o(1)$ via the exponent being non-positive.  For $I_3$ we Taylor expand around $t = \tau$, $$
\begin{aligned}
I_3 &= \int_{\tau - \delta_1}^{\tau + b} \exp\left( -\frac{1}{2} v \left( 2 - \frac{s}{v} h''(t_1) \right) \left(t - \tau\right)^2 \right) dt \\
&= \int_{\tau - \delta_1}^\tau \ldots + \int_\tau^{\tau + b} \ldots \doteq I^{(-)}_3 + I^{(+)}_3,
\end{aligned}
$$ where $t_1$ is between $t$ and $\tau$, and $h(t) \doteq t^{2/q}$.  For $I^{(-)}_3$ we have the sandwich $$
t_1 \in \left[\tau - \delta_1, \tau\right] \implies 2 - \frac{s}{v} h''(t_1) \in \left[2 - 2^{3-\frac{2}{q}} \left(\frac{2 - q}{q}\right), 4 - \frac{4}{q} \right] \doteq \left[h_0, h_1\right]
$$ where $h_0 > 0$, so we substitute $(t - \tau) = -y / v^{1/2}$ to get $$
\begin{aligned}
\frac{1}{v^{1/2}} \int_0^{\frac{1}{2} \tau v^{1/2}} \exp\left(-\frac{1}{2} h_0 y^2\right) \leq I^{(-)}_3 &\leq \frac{1}{v^{1/2}} \int_0^{\frac{1}{2} \tau v^{1/2}} \exp\left(-\frac{1}{2} h_1 y^2\right).
\end{aligned}
$$ The limit of integration diverges and we get $$
\begin{aligned}
I^{(-)}_3 &\in \left[ \sqrt{\frac{\pi}{2 h_1 v}}, \sqrt{\frac{\pi}{2 h_0 v}} \right] \in \sqrt{\frac{\pi}{2 h_1 v}} \left[ 1, \frac{1}{\sqrt{1 - \log(2)}} \right],
\end{aligned}
$$ noting $\left(1 - \log(2)\right) h_1 \leq h_0$. For $I^{(+)}_3$ we have the sandwich $$
t_1 \in \left[\tau, b\right] \implies 2 - \frac{s}{v} h''(t_1) \in \left[4 - \frac{4}{q}, 2 + \frac{2 (q - 2) b^{2/q-2}}{q^2} \left(\frac{s}{v}\right) \right] \doteq \left[h_1, h_2\right]
$$ noting $h_2 = 2 + o(1)$, so we substitute $(t - \tau) = y / v^{1/2}$ to get $$
\begin{aligned}
\frac{1}{v^{1/2}} \int_0^{(b - \tau) v^{1/2}} \exp\left(-\frac{1}{2} h_1 y^2\right) \leq I^{(+)}_3 &\leq \frac{1}{v^{1/2}} \int_0^{(b - \tau) v^{1/2}} \exp\left(-\frac{1}{2} h_2 y^2\right).
\end{aligned}
$$ The limit of integration diverges and we get $$
\begin{aligned}
I^{(+)}_3 &\in \left[ \sqrt{\frac{\pi}{2 h_1 v}}, \sqrt{\frac{\pi}{2 h_2 v}} \right] \in \sqrt{\frac{\pi}{2 h_1 v}} \left[ 1, \frac{q}{2 (q - 1)} + o(1) \right],
\end{aligned}
$$ Combining yields $$
\begin{aligned}
I_1 &\in \sqrt{\frac{2 \pi q}{4 (q - 1) v}} \left[ 1, \frac{1}{2 \sqrt{1 - \log(2)}} + \frac{q}{4(q - 1)} + o(1) \right] + o(1).
\end{aligned}
$$
To show for our actual $dG(z)$ it suffices to bound $$
\begin{aligned}
I_6 &\doteq \int_0^b \left(g(t) - g(0)\right) \exp\left(\left(t^{2/q} - \tau^{2/q}\right) s - \left(t^2 - \tau^2\right) v\right) dt \\
&= \int_0^{c} \ldots + \int_{c}^b \ldots \doteq I^{(-)}_6 + I^{(+)}_6,
\end{aligned}
$$ where $c > \tau$.  From the strong concavity result above, we choose $c = O(\sqrt{\tau})$ to ensure $I^{(+)}_6 = \exp\left(O\left(-v \tau\right)\right) = o(1)$.  Meanwhile for $I^{(-)}_6$ we have $g(t) - g(0)$ upper bounded by $O\left(\sqrt{\tau}\right) = o(1)$.

\end{proof}

\section{Computational Considerations}
\label{app:computationalconsiderations}

\subsection{Proof of \cref{thm:tildegprox}}

\tildegprox*
\begin{proof}
For $x \in (-1, 0]$, $$
\begin{aligned}
g(\lambda, x) &= \lambda x - \log\left(1 + \lambda x\right) \\
&\leq \frac{\lambda^2 x^2}{2 (1 + \lambda x)} & \left( \forall z \in (-1, 0]: \log\left(1 + z\right) \geq \frac{z}{2} \frac{2 + z}{1 + z} \right) \\
&\leq \frac{\lambda^2}{2 (1 - \lambda)} x^2 & (x > -1), \\
&= g(\lambda, x) + \left(\frac{x^2}{2} + \frac{x^3}{3}\right) \lambda^3 + O(\lambda^4).
\end{aligned}
$$ There is a single sufficient statistic for $x \in (-1, 0]$, $\sum_{s \leq t} 1_{X_s - \hat{X}_s \leq 0} \left(X_s - \hat{X}_s\right)^2$.

For $x \in (0, 1]$, $$
\begin{aligned}
g(\lambda, x) &= \lambda x - \log\left(1 + \lambda x\right) \\
&\leq \frac{\lambda^2}{2 + x \lambda} x^2 & \left( \forall z > 0: \log\left(1 + z\right) \geq \frac{2 z}{2 + z} \right) \\
&\leq \frac{\lambda^2}{2} x^2 & \left(x > 0\right) \\
&= g(\lambda, x) + \frac{x^3}{3} \lambda^3 + O(\lambda^4).
\end{aligned}
$$ There is a single sufficient statistic for $x \in (0, 1]$, $\sum_{s \leq t} 1_{X_s - \hat{X}_s \in (0, 1]} \left(X_s - \hat{X}_s\right)^2$.

For $x > 1$: we have $x \in [x_1, x_2) = [k^n, k^{n+1})$.  $g(\lambda, x)$ is strongly convex in this region with strong convexity parameter $m \doteq \frac{\lambda^2}{\left(1 + \lambda x_2\right)^2}$, therefore $$
\begin{aligned}
g(\lambda, x) &= \lambda x - \log\left(1 + \lambda x\right) \\
&\leq \alpha g(\lambda, x_1) + \left(1 - \alpha\right) g(\lambda, x_2) - \frac{1}{2} m \alpha \left(1 - \alpha\right) (k - 1)^2 x_1^2 \\
&= g(\lambda, x) + \frac{(k - 1)^3 \alpha (1 - \alpha^2)}{3 \left(k + \alpha - k \alpha\right)^3} x^3 \lambda^3 + O(\lambda^4) \\
&\leq g(\lambda, x) + \max_{\alpha \in [0, 1]} \frac{(k - 1)^3 \alpha (1 - \alpha^2)}{3 \left(k + \alpha - k \alpha\right)^3} x^3 \lambda^3 + O(\lambda^4) \\
&= g(\lambda, x) + \frac{2}{9 \sqrt{3}} x^3 (k - 1)^3 \lambda^3 + O\left((k-1)^4 \lambda^3\right) + O(\lambda^4).
\end{aligned}
$$ For each $s$, there are three sufficient statistics to accumulate: $$
\begin{aligned} 
n_s &\doteq \left\lfloor\log_k\left(X_s - \hat{X}_s\right)\right\rfloor, \\
\alpha_s &\doteq \frac{k^{n_s+1} - \left(X_s - \hat{X}_s\right)}{k^{n_s+1} - k_{n_s}}, \\
z_{n_s} &\leftarrow z_{n_s} + \alpha_s, \\
z_{n_{s+1}} &\leftarrow z_{n_{s+1}} + \left(1 - \alpha_s\right), \\
y_{n_s} &\leftarrow \frac{1}{2} \alpha_s (1 - \alpha_s), 
\end{aligned}
$$ from which for any $\lambda$ we can compute $$
\begin{aligned}
\sum_{s \leq t} 1_{X_s - \hat{X}_s > 1} \tilde{g}\left(\lambda, X_s - \hat{X}_s\right) &= \sum_n z_n g(\lambda, k^n) - \sum_n y_n \frac{\lambda^2}{(1 + \lambda k^{n+1})^2} (k - 1)^2 k^{2n}.
\end{aligned}
$$ 
The statement about computational cost follows from upper bounding the number of non-zero sufficient statistics.
\end{proof}

\subsection{Proof of \cref{thm:ddrm}}

\ddrm*
\begin{proof}

We start by replacing $g \leftarrow \tilde{g}$ in the $q$-specific construction from \cref{thm:qasym}.  Then we apply the mixture strategy from \citet{howard2021time}:  for some $\xi > 1$ we discretize the mixture distribution $$
\begin{aligned}
\mathcal{DM}_{q, \alpha}\left(X_{s \leq t}, \hat{X}_{s \leq t}\right) &= \sup\left\{ s \in \mathbb{R} : \sum_{j=0}^\infty y_j \exp\left(\lambda_j s - \sum_{s \leq t} \tilde{g}\left(\lambda_j, X_s - \hat{X}_s\right) \right) \leq \frac{1}{\alpha} \right\}, \\
\lambda_j &= \frac{\lambda_{\max}}{\xi^{j+1/2}}, \\
y_j &= \frac{\lambda_{\max} (\xi - 1) f(\lambda_j \sqrt{\xi})}{\xi^{j + 1}}\biggr|_{f(z) = \frac{q z^{q/2-1}}{2 \lambda_{\max}^{q/2}}} = \frac{q}{2} \left(\frac{\xi - 1}{\xi^{1 + \frac{j q}{2}}}\right).
\end{aligned}
$$
Embedding the previous into the robust construction of \cref{cor:qadapt} yields $$
\begin{aligned}
\mathcal{DM}_{\alpha}\left(X_{s \leq t}, \hat{X}_{s \leq t}\right) &= \sup\left\{ s \in \mathbb{R} : \sum_{j=0}^\infty \left(\sum_{k=0}^\infty w_k y_{j,k}\right) \exp\left(\lambda_j s - \sum_{s \leq t} \tilde{g}\left(\lambda_j, X_s - \hat{X}_s\right) \right) \leq \frac{1}{\alpha} \right\}, \\
\lambda_j &= \frac{\lambda_{\max}}{\xi^{j+1/2}}, \\
y_{j,k} &= \frac{q(k)}{2} \left(\frac{\xi - 1}{\xi^{1 + \frac{j q(k)}{2}}}\right) = \frac{q(k)}{2} \left(\frac{\xi - 1}{\xi^{1 + j}}\right) \xi^{j \left(1 - \frac{q(k)}{2}\right)}.
\end{aligned}
$$ For some $r > 1$ the choice $w_k = \zeta(r)^{-1} (k + 1)^{-r}$ yields $$
\begin{aligned}
\sum_{k=0}^\infty w_k y_{j,k} &\geq \frac{1}{2} \left(\frac{\xi - 1}{\xi^{1 + j}}\right) \frac{1}{\zeta(r)} \sum_{k=0}^\infty (k + 1)^{-r} \left(1 + \eta^k\right) = \frac{1}{2} \left(\frac{\xi - 1}{\xi^{1 + j}}\right) \left(1 + \frac{1 + \polylog_r(\eta)}{\eta \zeta(r)}\right).
\end{aligned}
$$ 
\end{proof}

\end{document}